\documentclass{article}

\DeclareUnicodeCharacter{2113}{\ensuremath{\ell}}
\usepackage{microtype}
\usepackage{graphicx}
\usepackage{subcaption}
\usepackage{tcolorbox}
\usepackage{booktabs} % for professional tables
\usepackage{lipsum}
\usepackage{changepage}
\usepackage{xcolor}
\usepackage{tcolorbox}
\usepackage{multicol}
\usepackage{xcolor}
\definecolor{lazyfd}{HTML}{440154}  % deep purple
\definecolor{richfi}{HTML}{D4BE02}
\definecolor{nested}{HTML}{2DB7AE}
\definecolor{gray30}{gray}{0.3}
\usepackage{hyperref}
\usepackage{comment}

\usepackage{enumitem}
\usepackage[accepted]{icml2026}
\newcommand{\icmlbreakauthorline}{\par\smallskip}

\usepackage{amsmath}
\usepackage{amssymb}
\usepackage{mathtools}
\usepackage{amsthm}
\usepackage{thm-restate}

\usepackage[capitalize,noabbrev]{cleveref}

\theoremstyle{plain}
\newtheorem{theorem}{Theorem}[section]
\newtheorem{proposition}[theorem]{Proposition}

\theoremstyle{definition}
\newtheorem{definition}[theorem]{Definition}

\theoremstyle{remark}

\usepackage[textsize=tiny]{todonotes}

\icmltitlerunning{A Theory of How Pretraining Shapes Inductive Bias in Fine-Tuning}

\begin{document}

\twocolumn[
  \icmltitle{A Theory of How Pretraining Shapes Inductive Bias in Fine-Tuning}

  % It is OKAY to include author information, even for blind submissions: the
  % style file will automatically remove it for you unless you've provided
  % the [accepted] option to the icml2026 package.

  % List of affiliations: The first argument should be a (short) identifier you
  % will use later to specify author affiliations Academic affiliations
  % should list Department, University, City, Region, Country Industry
  % affiliations should list Company, City, Region, Country

  % You can specify symbols, otherwise they are numbered in order. Ideally, you
  % should not use this facility. Affiliations will be numbered in order of
  % appearance and this is the preferred way.
  \icmlsetsymbol{equal}{*}

    \begin{icmlauthorlist}
    \icmlauthor{Nicolas Anguita}{cam}
    
    \icmlbreakauthorline
    \icmlauthor{Francesco Locatello}{austria}
    \icmlauthor{Andrew M. Saxe}{ucl}
    \icmlauthor{Marco Mondelli}{austria}
    \icmlauthor{Flavia Mancini}{cam}

    \icmlbreakauthorline
    \icmlauthor{Samuel Lippl}{equal,columbia}
    \icmlauthor{Clémentine Dominé}{equal,austria}

  \end{icmlauthorlist}

  \icmlaffiliation{cam}{Department of Engineering, University of Cambridge}
  \icmlaffiliation{austria}{Institute of Science and Technology, Austria (ISTA)}
  \icmlaffiliation{ucl}{Gatsby Computational Neuroscience Unit and Sainsbury Wellcome Centre, UCL}
  \icmlaffiliation{columbia}{Center for Theoretical Neuroscience, Columbia University}
  \icmlcorrespondingauthor{Nicolas Anguita}{na658@cam.ac.uk}

  % You may provide any keywords that you find helpful for describing your
  % paper; these are used to populate the "keywords" metadata in the PDF but
  % will not be shown in the document
  \icmlkeywords{Machine Learning, ICML}

  \vskip 0.3in
]

% this must go after the closing bracket ] following \twocolumn[ ...

% This command actually creates the footnote in the first column listing the
% affiliations and the copyright notice. The command takes one argument, which
% is text to display at the start of the footnote. The \icmlEqualContribution
% command is standard text for equal contribution. Remove it (just {}) if you
% do not need this facility.

% Use ONE of the following lines. DO NOT remove the command.
% If you have no special notice, KEEP empty braces:
\printAffiliationsAndNotice{* Co-senior authors.}  % no special notice (required even if empty)
% Or, if applicable, use the standard equal contribution text:
% \printAffiliationsAndNotice{\icmlEqualContribution}

\begin{abstract}
Pretraining and fine-tuning are central stages in modern machine learning systems. Feature learning is important across both stages: deep neural networks learn a broad range of useful features during pretraining and further refine those features during fine-tuning. However, an end-to-end theory of how initialization impacts feature reuse and refinement has remained elusive. Here we develop an analytical theory of the pretraining–fine-tuning pipeline in diagonal linear networks, deriving exact expressions for the generalization error from initialization parameters and task statistics. We find that different initialization choices place the network into four distinct fine-tuning regimes that are distinguished by their ability to support feature learning and reuse---and therefore by the task statistics for which they are beneficial. In particular, a smaller initialization scale in earlier layers enables the network to both reuse and refine its features, leading to superior generalization on fine-tuning tasks that rely on a subset of pretraining features. We demonstrate empirically that the same initialization parameters impact generalization in ResNets trained on CIFAR-100 and SVHN as well as Transformers trained on modular arithmetic tasks. Overall, our results demonstrate analytically how data and network initialization interact to shape fine-tuning generalization, highlighting an important role for the relative scale of initialization across different layers in enabling continued feature learning during fine-tuning.
\end{abstract}

\section{Introduction}
Deep neural networks are often trained in stages: first, they are pretrained on general-purpose tasks for which large amounts of data are available; then, they are fine-tuned on the actual target task, for which data may be more limited. This pretraining--fine-tuning pipeline (PT+FT) has emerged as an essential workhorse for modern deep learning \citep{bommasani2021opportunities,parthasarathy2024ultimate,awais2025foundation}.

To benefit from fine-tuning, neural networks must learn task-specific features during pretraining and reuse them during fine-tuning \citep{lampinen2018analytic,saxe2019mathematical,shachaf2021theoretical,tahir2024features}. At the same time, fine-tuning further adapts these features, potentially improving generalization \citep{yosinski2014transferable,huh2016makes,jain2023mechanistically}. As a result, the success of the PT+FT pipeline critically depends on how feature learning unfolds over the pretraining and fine-tuning stages.

Despite the practical success of PT+FT, the factors governing feature learning across pretraining and fine-tuning have remained unclear. Prior work has shown that weight initialization---in particular, its absolute and relative scale across layers---shifts neural network training between a fixed-feature (``lazy'') and a feature-learning (``rich'') regime \citep{chizat2019lazy,braun2023exact,domine2024lazy,kunin_2024_get}. In the single-task setting, feature-learning regimes often induce improved generalization \citep{chizat2020implicit,fort2020deep,vyas2022limitations}. However, it remains unclear how these insights extend to multi-stage training like PT+FT \citep[though see][see Section~\ref{sec:related}]{lipplinductive}. This limits our ability to use weight initialization as a principled tool for controlling the balance between reusing pretrained features and inducing continued feature learning.

To address this gap, we study the PT+FT pipeline in diagonal linear networks, a tractable theoretical model that exhibits feature-learning behavior, and develop an end-to-end theory of the resulting generalization error. Our theory elucidates how weight initialization affect the inductive bias of PT+FT, and how this bias interacts with data properties to determine when different initialization schemes are optimal.

\textbf{Specific Contributions.}
\begin{itemize}
\item We analytically derive the implicit bias of PT+FT in diagonal linear networks and the resulting generalization error as a function of weight initialization, task parameters, and data scale (Section~\ref{sec:Theory}). To our knowledge, prior work has not provided such an integrated characterization in a PT+FT setup. %; in particular, \citet{lipplinductive} focused on the single limiting case $c_{PT} \to 0$, $\lambda_{PT} = 0$, $\gamma_{FT} = 0$, which we show captures only a subset of the learning regimes we identify.
\item We identify three limiting regimes induced by this implicit bias: (I) a pretraining-independent rich regime, (II) a pretraining-dependent lazy regime, and (III) a pretraining-independent lazy regime. We further highlight a universal trade-off between pretraining dependence and feature learning in fine-tuning (Section~\ref{sec:learning_regime_lmit}).
\item In light of this trade-off, we highlight a fourth, intermediate regime: (IV) the pretraining-dependent rich regime. We find that the relative scale of initialization plays a key role in entering this regime (Section~\ref{sec:full_phase}).
\item We demonstrate that pretraining and fine-tuning task parameters determine which of these regimes will generalize best (Section~\ref{sec:gen-error}). In particular, we highlight the relative scale of initialization across layers as a key lever for controlling the inductive bias of PT+FT. %In particular, we highlight the relative scale of initialization across layers as a key and previously underappreciated lever for controlling the inductive bias of PT+FT, enabling the network to simultaneously leverage sparsity and feature reuse.
%\item In Section \ref{sec:learning_regime}, we model the full transition of fine-tuning learning regimes, showing that it is shaped by the interplay of pretraining initialization (relative and absolute scale of the weights), fine-tuning re-initialization, and data characteristics (task sparsity and task overlap), extending beyond the influence of initialization alone.

%\begin{itemize}
 %   \item In particular, our theory identifies three limiting regimes: a pretrained-feature-dependent lazy regime, a fine-tuning-specific rich regime, and a novel pretraining-independent lazy regime. In particular, we highlight a fundamental trade-off between rich fine-tuning and feature-dependence.
  %  \item  We also show that depending on the data characteristics, an intermediate regime (which is both rich and pretrained-feature-dependent) can be optimal for generalization.
%characterise by the L order andthe feature dependance 
%\item We highlight the role of the relative scale in fine-tune generalisation, going beyond previous insight around scale  Remarkably, we find that negative relative scale improves sample efficiency during fine-tuning. 
%\item  {\color{orange} add further insght on the generalissaiton }
%\end{itemize}
\item Finally, in Section \ref{sec:emp_results}, we demonstrate that our theoretical insights extend empirically to deep neural networks trained on vision tasks or modular addition.
\end{itemize}
Altogether, we provide an end-to-end understanding of how initialization choices shape fine-tuning generalization. In doing so, we identify actionable levers for controlling the interplay between reusing pretrained features, refining those features, and learning new features.

\section{Related Work}
\label{sec:related}
\textbf{Rich and Lazy Learning.}
Neural networks are often trained in an overparameterized setting where there are many possible solutions to the training data. In this setting, the training procedure consistently biases the network towards a particular solution (a phenomenon called ``implicit regularization,'' \citealp{soudry2018implicit}), explaining why neural networks generalize well even without explicit regularization \citep{zhang2016understanding}. The distinction between lazy and rich learning regimes has become central to understanding the implicit regularization of neural networks. In the lazy regime, training dynamics are governed by a fixed representation, causing the network to effectively behave like a kernel machine \citep{jacot2018neural,chizat2019lazy}. In contrast, the rich (or feature-learning) regime is marked by the emergence of task-specific, low-dimensional representations and is commonly linked to finite-width networks and smaller weight scales \citep{saxe_2014_exact,gunasekar2018characterizing,savarese2019infinite,lyu2019Gradient,nacson2019lexicographic,chizat2020implicit,atanasov_2021_neural}. Recent work has shown that the relative and absolute scale of initial weights across layers induce a continuum of feature-learning dynamics and corresponding inductive biases \cite{woodworth2020kernel,azulay2021implicit,domine2024lazy,kunin_2024_get,jarvis2025theory}. Our work builds on these insights to show how this spectrum manifests in the PT+FT setting. We show that, depending on task parameters, intermediate regimes can induce superior generalization, suggesting that analyses restricted to extreme cases may miss important behaviors.

\textbf{Implicit Regularization and Diagonal Linear Networks.}
Diagonal linear networks (Section~\ref{sec:model}) instantiate a simple, analytically tractable model that captures the changes in inductive bias arising from different learning regimes in more complex models: specifically, lazy-regime training finds the (non-sparse) solution with minimal $\ell_2$-norm, whereas rich-regime training finds the sparse solution with minimal $\ell_1$-norm \citep{woodworth2020kernel,azulay2021implicit}. Prior work has characterized the emergence of the infinite-time solution in these networks \citep{pesme2023saddle,berthier2023incremental} and investigated the role of stochasticity and step size \citep{pesme2021implicit,nacson2022implicit}.

\textbf{Implicit Regularization of Fine-Tuning.}
In dense linear networks, a higher similarity in task features between pretraining and fine-tuning improves generalization performance \citep{lampinen2018analytic,shachaf2021theoretical,tahir2024features}. Moreover, fine-tuning with backpropagation can result in improved in-distribution generalization compared to only training the linear readout, but may harm out-of-distribution performance \citep{kumar2022fine,tomihari2024understanding}. These theories generally focus on single-output dense linear networks (which have the same inductive bias in the rich and lazy regime) or training in the kernel regime \citep{malladi2023kernel}. In contrast, our perspective allows us to characterize how different pretraining initializations shift the fine-tuning behavior between the lazy and rich regime. Other works characterize the influence of pretraining data selection on fine-tuning under distributional shifts \citep{kang2024get,cohen2024ask,jain2025train}. Finally, \citet{lipplinductive} characterized the inductive bias of diagonal linear networks for infinitesimal initial weights in pretraining and studied the resulting generalization behavior using simulations in a teacher-student setup. Here we substantially extend their work by characterizing the inductive bias conferred by a broad range of pretraining and fine-tuning initializations, revealing a crucial role played by the relative scale of initialization. Moreover, we derive an analytical theory describing the generalization error (rather than relying on simulations), providing closer insight into specific conditions under which certain pretraining initializations help or hurt performance.

\textbf{Replica-Theoretical Characterization of the Generalization Error.}
We leverage the replica method \citep{edwards1975theory,mezard1987spin} to characterize the ability of diagonal linear networks to recover ground-truth parameters in a fine-tuning setup. The replica method is non-rigorous, but provides a powerful tool for analytically characterizing estimation errors in the high-dimensional limit under different penalties, including $\ell_1$ \citep{guo2005randomly,rangan2009asymptotic}. These formulas were later confirmed using rigorous approaches \citep{bayati2011dynamics,bayati2011lasso}. We specifically rely on the approach in \citet{bereyhi2018maximum,bereyhi2019statistical}, which characterizes estimation errors with potentially different penalties across dimensions. Our results imply that pretraining can be understood as modifying the penalties across different dimensions \citep[see also][]{khajehnejad2009weighted,vaswani2010modifiedcs}.

\section{Theoretical Setup}
\begin{figure}
\label{fig1}
    \centering
     \includegraphics[width=\linewidth]{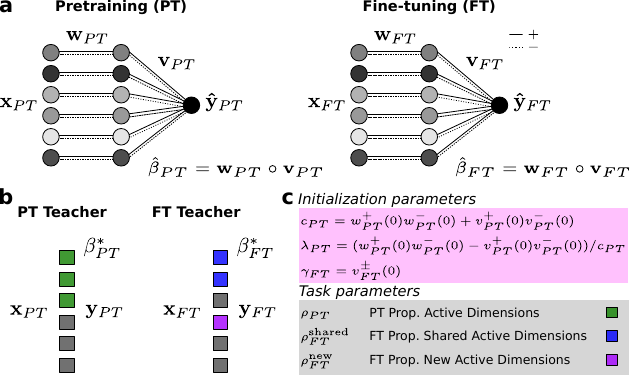}
    \caption{\textbf{Setup.} (a) We pretrain and fine-tune diagonal linear networks. (b) We generate pretraining and fine-tuning tasks by sampling from teacher networks with a subset of active dimensions, which can overlap between tasks. (c) Our theory considers the influence of initialization parameters $c_{PT}$, $\lambda_{PT}$, and $\gamma_{FT}$ and task parameters $\rho_{PT}$, $\rho_{FT}^{\text{shared}}$, and $\rho_{FT}^{\text{new}}$. 
}
    \label{fig.0}
\end{figure}
\subsection{Network Architecture}
\label{sec:model}
We study the generalization behavior of \emph{diagonal linear networks} (\textbf{Fig.~\ref{fig.0}}), a simple one-hidden-layer network architecture that parametrizes linear maps $f:\mathbb{R}^D\to\mathbb{R}$ as
\begin{equation}
    f_{\mathbf{w},\mathbf{v}}(\mathbf{x})=\beta(\mathbf{w},\mathbf{v})^Tx,
    \label{eq:diagonal_net}
\end{equation}
where $\beta(\mathbf{w},\mathbf{v}):=\mathbf{v}^+\circ\mathbf{w}^+-\mathbf{v}^-\circ \mathbf{w}^-\in\mathbb{R}^D$ ($\circ$ indicates element-wise multiplication). Here, $\mathbf{w}^{+},\mathbf{w}^-\in\mathbb{R}^D$ comprise the hidden weights of the network and $\mathbf{v}^{+},\mathbf{v}^-\in\mathbb{R}^D$ comprise its output weights. $\mathbf{w}^+,\mathbf{v}^+$ and $\mathbf{w}^-,\mathbf{v}^-$ parameterize the positive and negative pathway, respectively; separating these pathways allows us to initialize the network at $\beta=0$ while avoiding a saddle point.

Diagonal linear networks capture two key aspects of the deep neural networks used in practice: 1) their initial weights control their transition between the rich (feature-learning) and lazy (kernel) regime and 2) their generalization behavior is different between these regimes. They therefore provide a useful theoretical model to study the impact of initialization on pretraining and fine-tuning.

Beyond these high-level motivations, diagonal linear networks are also mathematically connected to nonlinear architectures. The learning dynamics of ReLU networks decompose into two orthogonal components: one capturing changes in the normalized first-layer weight vector, and one capturing changes in the magnitudes of the first and second layer, with the latter being identical to the dynamics of a diagonal linear network. In particular, accordingly, both diagonal linear networks and ReLU networks exhibit a sparse inductive bias in the rich regime \citep{woodworth2020kernel,chizat2020implicit}. This grounds the relevance of our analysis for nonlinear settings.

\subsection{Network Training}
\label{sec:training}
We investigate diagonal linear networks trained with gradient flow to minimize mean-squared error. We consider two datasets: a pretraining (PT) task $\mathbf{X}_{PT}\in\mathbb{R}^{N_{PT}\times D},\mathbf{y}_{PT}\in\mathbb{R}^{N_{PT}}$ and a fine-tuning (FT) task $\mathbf{X}_{FT}\in\mathbb{R}^{N_{FT}\times D},\mathbf{y}_{FT}\in\mathbb{R}^{N_{FT}}$. The training proceeds in two stages: we first pretrain on the PT task and then fine-tune on the FT task (PT+FT) (\textbf{Fig.~\ref{fig.0}a}). In both cases, we assume that the model trains to zero training error. We denote the parameters learned during pretraining and fine-tuning by $\mathbf{w}_{PT}(t),\mathbf{v}_{PT}(t)$ and $\mathbf{w}_{FT}(t),\mathbf{v}_{FT}(t)$, the corresponding network functions by $\hat{\beta}_{PT}(t)$ and $\hat{\beta}_{FT}(t)$, and their predictions by $\mathbf{\hat{y}}_{PT}=\mathbf{X}_{PT}\hat{\beta}_{PT}$ and $\mathbf{\hat{y}}_{FT}=\mathbf{X}_{FT}\hat{\beta}_{FT}$. $t$ denotes the learning time, $t=0$ indicating the initial weights, and $t=\infty$ the weights at the end of training. Where the context is clear, we denote the network functions at the end of pretraining and fine-tuning by $\hat{\beta}_{PT}$ and $\hat{\beta}_{FT}$.

\textbf{Initialization.} We assume $w^+(0)=w^-(0)$ and $v^+(0)=v^-(0)$ (to avoid biasing the network towards positive or negative coefficients). Building on insights from the feature learning literature, we focus on two key aspects of the weight initialization: 1) the \textbf{absolute scale} (capturing the overall initial weight magnitude),
\begin{equation}
    c_{PT}:=\mathbf{w}_{PT}^{+}(0)\mathbf{w}_{PT}^{-}(0)+\mathbf{v}_{PT}^{+}(0)\mathbf{v}_{PT}^{-}(0),
\end{equation}

% \begin{equation}
%     c_{PT}:=\mathbf{w}_{PT}^{+}(0)\mathbf{w}_{PT}^{-}(0)+\mathbf{v}_{PT}^{+}(0)\mathbf{v}_{PT}^{-}(0),
% \end{equation}
and 2) the \textbf{relative scale} (capturing the difference in initial weight magnitude between the first and the second layer),
\begin{equation}
    \lambda_{PT}:=\frac{\mathbf{w}_{PT}^{+}(0)\mathbf{w}_{PT}^{-}(0)-\mathbf{v}_{PT}^{+}(0)\mathbf{v}_{PT}^{-}(0)}{c_{PT}}\in[-1,1].
\end{equation}
After pretraining, we re-balance the positive and negative pathways by setting
\begin{equation}
    \mathbf{w}_{FT}^{\pm}(0):=\mathbf{w}_{PT}^+(\infty)+\mathbf{w}_{PT}^-(\infty),
\end{equation}
to ensure that the effective network function at the beginning of fine-tuning is zero again: $\beta_{FT}(0)=0$.\footnote{This rebalancing step is motivated by the structure of diagonal linear networks, in which the signs within each pathway are conserved throughout training. Without rebalancing, an additional inductive bias towards features of the same sign as during pretraining would arise, obscuring the effect of feature overlap.} %The same qualitative insights on the impact of initialization parameters apply without rebalancing, with the addition of this same-sign inductive bias.}

We re-initialize the readout weights to a fixed scale $\gamma_{FT}$:
\begin{equation}
    \mathbf{v}_{FT}^{\pm}(0):=\gamma_{FT}\geq0.
\end{equation}
By systematically varying these three initialization parameters ($c_{PT},\lambda_{PT},$ and $\gamma_{FT}$), we aim to understand their effect on fine-tuning performance.
\subsection{Data Generative Model}
To investigate the generalization performance of diagonal linear networks trained in this manner, we consider a \emph{teacher-student setup}, where we sample ground-truth pretraining and fine-tuning functions (``teachers'') by sampling from a joint distribution
$
    \beta_{PT}^*,\beta_{FT}^*\sim p_{PT,FT}(\beta_{PT}^*,\beta_{FT}^*)$.
We generate the dataset by sampling random inputs,
$\mathbf{X}_{PT},\mathbf{X}_{FT}\sim\mathcal{N}(0,\tfrac{1}{D}),$
and generating the outputs through the teachers, $ \mathbf{y}_{PT}=\mathbf{X}_{PT}\mathbf{\beta}_{PT},\quad
\mathbf{y}_{FT}=\mathbf{X}_{FT}\mathbf{\beta}_{FT}.
$
We aim to characterize the deviation between the ground-truth $\beta_{FT}^*$ and the estimate resulting from fine-tuning, $\hat{\beta}_{FT}$:
\begin{equation}
    \mathcal{E}:=\|\beta_{FT}^*-\hat{\beta}_{FT}\|_2^2.
\end{equation}
We characterize $\mathcal{E}$ through a replica-theoretical approach, which characterizes network behavior in the high-dimensional limit ($D\to\infty$), where we scale the pretraining and fine-tuning data size with $D$:
\begin{equation}
    \alpha_{PT}:=\lim_{D\to\infty}N_{PT}/D,\quad
    \alpha_{FT}:=\lim_{D\to\infty}N_{FT}/D.
\end{equation}
For most of our analysis we focus on the case $\alpha_{PT} \geq 1$, in which pretraining perfectly recovers the ground truth ($\hat{\beta}_{PT} = \beta^*_{PT}$). We relax this assumption and analyze imperfect pretraining ($\alpha_{PT} < 1$) in Section~\ref{sec:imperfect_pt}. We call $\alpha_{FT}$ (also known as the ``load'') the \emph{data scale}. Finally, we characterize a specific family of generative models $p_{PT,FT}(\beta_{PT}^*,\beta_{FT}^*)$, in which there are $J$ underlying groups that are sampled with probabilities $\pi\in\mathbb{R}^J$ for each dimension $d$. For the sampled group $j\in\{1,\dotsc,J\}$, $\beta_{PT,d}^*$ and $\beta_{FT,d}^*$ are independently sampled from group-specific distributions $p^{(j)}_{PT}$ and $p^{(j)}_{FT}$, i.e.\ their dependency is mediated via their group membership (see Definition~\ref{def:group}).

To investigate the consequences of our theory (and confirm it in empirical simulations), we will focus on a \textbf{spike-and-slab} distribution for $\beta^*_{PT},\beta^*_{FT}$. In diagonal linear networks, feature learning is useful when only a sparse subset of dimensions are active for a given task. We therefore sample the set of active dimensions for the pretraining task,
\begin{equation}
    \theta_{PT}\sim\text{Bernoulli}(\rho_{PT}),\quad\theta_{PT}\in\{0,1\}^D,
    \label{eq:task-1}
\end{equation}
and then generate $\beta_{PT}$ by sampling random signs for the active dimensions (i.e.\ any $d$ for which $\theta_d=1$):
\begin{equation}
    \beta_{PT}=(\sigma/\sqrt{\rho_{PT}})\circ\theta_{PT},\quad\sigma\sim\text{Cat}(\{-1,1\}). \label{eq:task-2}
\end{equation}
We similarly assume that a subset of dimensions is active on the fine-tuning task by sampling $\theta_{FT}\in\{0,1\}^D$. However, in this case, whether a given dimension is active can depend on whether it was already active on the pretraining task:
\begin{align}
\begin{split}
    &\theta_{FT}|\theta_{PT}=1\sim\text{Bernoulli}(\rho_{FT}^{\text{shared}}/\rho_{PT}),\\
    &\theta_{FT}|\theta_{PT}=0\sim\text{Bernoulli}(\rho_{FT}^{\text{new}}/(1-\rho_{PT})),\\
    &\beta_{FT}=b_{FT}\circ\theta_{FT},\, b_{FT}\sim\mathcal{N}(0,\tfrac{1}{\rho_{FT}^{\text{shared}}+\rho_{FT}^{\text{new}}}). \label{eq:task-3}
\end{split}
\end{align}
Thus, $\rho_{FT}^{\text{shared}}\ge\rho_{FT}^{\text{new}}$ means that dimensions active during pretraining are more likely to be active during fine-tuning, $\rho_{FT}^{\text{shared}}\le\rho_{FT}^{\text{new}}$ means that they are less likely (\textbf{Fig.~\ref{fig.0}b}). % \textbf{Fig.~\ref{fig.0}} depicts the network configuration varying the task parameters. %, $\rho_{PT}$, $\rho_{FT}^{\text{shared}}$, and $\rho_{FT}^{\text{new}}$.

Taken together, we will characterize how 1) initialization parameters, $c_{PT}$, $\lambda_{PT}$, and $\gamma_{FT}$, 2) task parameters, $\rho_{PT}$, $\rho_{FT}^{\text{shared}}$, and $\rho_{FT}^{\text{new}}$, and 3) data scale, $\alpha_{FT}$, impact generalization during fine-tuning (\textbf{Fig.~\ref{fig.0}c}).

\section{Theoretical Characterization of Generalization Error in Fine-Tuning}
\label{sec:Theory}
We will derive a theoretical characterization of the generalization error $\mathcal{E}$ in two steps: first, we will derive the implicit regularization induced by PT+FT in diagonal linear networks as a function of initialization parameters (Theorem~\ref{thm:implicit_bias}), then we will characterize the resulting generalization error for our data generative model (Proposition~\ref{prop:replica}).
\subsection{Implicit Inductive Bias of Fine-Tuning}
\label{sec:imp_bias}
We first derive the implicit inductive bias of overparameterized diagonal networks on PT+FT, building on analysis in prior work \citep{azulay2021implicit,lipplinductive}.
\begin{restatable}[Implicit bias]{theorem}{ImplicitBias}
\label{thm:implicit_bias}
Consider a diagonal linear network trained on PT+FT (as described in Section~\ref{sec:training}). Then, the gradient flow solution at convergence is given by
\begin{align}&\arg\min_{\beta_{FT}} Q_k(\beta_{FT}) 
\quad \text{s.t.} \, \mathbf{X}_{FT}^\top \beta_{FT} = \mathbf{y}_{FT},\label{eq:opt_2}\\
&\text{where} \quad Q_k(\beta_{FT})= \sum_{d=1}^Dq_{k_d}(\beta_{FT,d}),\\
&q_k(z)\;=\;\frac{\sqrt{k}}{4}\left(1-\sqrt{1+\frac{4z^2}{k}}+\frac{2z}{\sqrt{k}}\operatorname{arcsinh}\left(\frac{2z}{\sqrt{k}}\right)\right), \\
&  k_d = \Big(
2c_{PT}(1+\lambda_{\text{PT}})
   \Bigl(1+\sqrt{1+(\hat{\beta}_{\text{PT, d}}/c_{\text{PT}})^{2}}\Bigr)\;+\;\gamma^{2} \Big)^2.\label{eq:k-law}
\end{align}
\end{restatable}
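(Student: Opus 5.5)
The plan is to reduce everything to a per-coordinate "mirror-flow" computation in the style of \citet{azulay2021implicit}, applied twice: once to pretraining, to find the pretrained weights, and once to fine-tuning, to extract the implicit regularizer. Throughout, let $\mathbf{r}(t)$ be the residual of the current stage and $\mathbf{g}(t)=\mathbf{X}^\top \mathbf{r}(t)$. Under gradient flow on the squared loss,
\[
\dot w^{\pm}_d=\mp v^{\pm}_d g_d,\qquad \dot v^{\pm}_d=\mp w^{\pm}_d g_d .
\]
Consequently the per-pathway, per-coordinate quantity $\delta^\pm_d:=(w^\pm_d)^2-(v^\pm_d)^2$ is conserved in each stage.

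\textbf{Step 1 (pathway dynamics).} Fix a coordinate $d$ and write $p^\pm=w^\pm v^\pm$. From the flow, $\dot p^\pm=\mp\big((w^\pm)^2+(v^\pm)^2\big)g_d$. Since $(w^\pm)^2+(v^\pm)^2=\sqrt{\delta^2+4(p^{\pm})^2}$, this gives $\tfrac{d}{dt}\operatorname{arcsinh}(2p^\pm/\delta)=\mp 2g_d$. Define $s_d(t):=-\int_0^t g_d$. Then $\operatorname{arcsinh}(2p^\pm/\delta)=a\pm 2s_d$, where $a$ is set by the initialization. The initializations of the two pathways coincide, and I will check that pretraining plus rebalancing preserves this symmetry. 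Using $\sinh(a+x)-\sinh(a-x)=2\cosh a\,\sinh x$,
\[
\beta_d=p^+-p^-=\tfrac{\sqrt{k_d}}{2}\sinh(2s_d),\qquad \sqrt{k_d}=2\delta\cosh a=2\big(w_d(0)^2+v_d(0)^2\big).
\]

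\textbf{Step 2 (implicit bias via KKT).} Since $\mathbf{s}(t)\in\operatorname{range}(\mathbf{X}^\top)$ for all $t$, at convergence (assumed, with zero training error) $\hat\beta$ satisfies $\tfrac12\operatorname{arcsinh}(2\hat\beta_d/\sqrt{k_d})=s_d$ with $\mathbf{s}=\mathbf{X}^\top\nu$ for some $\nu$. Set $q_k'(z)=\tfrac12\operatorname{arcsinh}(2z/\sqrt{k})$ and integrate with $q_k(0)=0$. This yields exactly the stated
\[
q_k(z)=\tfrac{\sqrt k}{4}\Big(1-\sqrt{1+4z^2/k}+\tfrac{2z}{\sqrt k}\operatorname{arcsinh}\big(\tfrac{2z}{\sqrt k}\big)\Big).
\]
Because $q_k$ is strictly convex, the condition $\nabla Q_k(\hat\beta)\in\operatorname{range}(\mathbf{X}^\top)$, together with feasibility, is precisely the KKT system of \eqref{eq:opt_2}. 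So $\hat\beta_{FT}$ is its unique minimizer.

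\textbf{Step 3 (computing $k_d$ from pretraining).} It remains to evaluate $\sqrt{k_d}=2\big(w_{FT,d}(0)^2+\gamma^2\big)$ in the fine-tuning stage, up to the paper's normalization of the $w$-versus-$v$ contributions. For this I apply Step 1 to pretraining. The conserved quantity there is $\delta_{PT}=c_{PT}\lambda_{PT}$ per pathway, and the final products satisfy $p^+-p^-=\hat\beta_{PT,d}$ and $p^+p^-=\text{const}$ (the latter because $\sinh(a+x)\sinh(a-x)$ has a closed form). From $p^\pm$ and $\delta$, I recover the squares $(w^\pm_{PT}(\infty))^2=\tfrac12\big(\delta+\sqrt{\delta^2+4(p^\pm)^2}\big)$. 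Substituting into $w_{FT}(0)=w^+_{PT}(\infty)+w^-_{PT}(\infty)$ should produce the term $2c_{PT}(1+\lambda_{PT})\big(1+\sqrt{1+(\hat\beta_{PT,d}/c_{PT})^2}\big)$.

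I expect this last algebraic identification to be the main obstacle. It requires tracking the constants in the definitions of $c_{PT}$ and $\lambda_{PT}$, simplifying the cross term $2w^+w^-$ via the conserved products, and reconciling factors of $2$ in how $\gamma$ enters $k_d$. My first attempt will be to express everything through $A:=\delta\cosh a$ and the identity $\sqrt{\delta^2+4(p^\pm)^2}=A\cosh(2s)\pm\ldots$, and then expand $\big(w^+_{PT}(\infty)+w^-_{PT}(\infty)\big)^2$ symbolically, checking the result against the special cases $\hat\beta_{PT,d}=0$ and $\lambda_{PT}=\pm1$.
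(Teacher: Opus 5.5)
Your Steps 1--2 are sound. They re-derive, for the symmetric-pathway initialization used here, the mirror-flow result of Azulay et al.\ that the paper simply cites, and your $\sqrt{k_d}=2\big(w_d(0)^2+v_d(0)^2\big)$ is the paper's $k=4c^2$.

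\textbf{The gap is in Step 3}, which is the only part of the theorem beyond that citation. The relation $p^+p^-=\text{const}$ is false: $\sinh(a+x)\sinh(a-x)=\sinh^2 a-\sinh^2 x$, so at the end of pretraining $p^+p^-=w_0^2v_0^2-\delta^2\hat\beta_{PT,d}^2/(4c_{PT}^2)$. This is constant only when $\lambda_{PT}=0$. Solving $p^+-p^-=\hat\beta_{PT,d}$ together with $p^+p^-=w_0^2v_0^2$ therefore gives the wrong pretrained weights whenever $\lambda_{PT}\neq 0$. For example, at $\lambda_{PT}=1$ ($v_0=0$, $a=0$) it forces one of $p^\pm$ to vanish, whereas the flow gives $p^\pm=\pm\hat\beta_{PT,d}/2$, so the $k_d$ you would obtain is wrong. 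The invariant that actually couples the two pathways is $c_{PT}=w^+w^-+v^+v^-$, which is the one the paper imposes.

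You do not need it as a separate input, because your Step 1 already contains it. Put $\theta^\pm:=a/2\pm s$. Then $\sinh(2\theta^\pm)=2p^\pm/\delta$ means $w^\pm=\sqrt{\delta}\cosh\theta^\pm$ and $v^\pm=\sqrt{\delta}\sinh\theta^\pm$, which is exactly the paper's parametrization. Here $\theta^++\theta^-=a$ with $\cosh a=c_{PT}/\delta$ is the conservation of $c_{PT}$, and $\hat\beta_{PT,d}=\delta\cosh a\,\sinh 2s$ gives $\sinh 2s=\hat\beta_{PT,d}/c_{PT}$. Hence $w^++w^-=2\sqrt{\delta}\cosh(a/2)\cosh s$, and
\[
\big(w^+_{PT}(\infty)+w^-_{PT}(\infty)\big)^2=\delta(1+\cosh a)(1+\cosh 2s)=c_{PT}(1+\lambda_{PT})\Big(1+\sqrt{1+(\hat\beta_{PT,d}/c_{PT})^2}\Big),
\]
with no cross-term bookkeeping. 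For $\lambda_{PT}<0$, interchange the roles of $\cosh$ and $\sinh$ for $w$ and $v$; the same formula results via $|\delta|(\cosh a-1)=c_{PT}+\delta$. Treat $\lambda_{PT}=0$ as a limit. Your $\operatorname{arcsinh}(2p/\delta)$ form in Step 1 needs the same care, including in fine-tuning, where $\delta_{FT}$ can be negative, although $\sqrt{k}=2(w(0)^2+v(0)^2)$ itself is sign-independent.

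Finally, do not try to ``reconcile'' the factor of $2$. Your $\sqrt{k_d}=2\big(w_{FT,d}(0)^2+\gamma^2\big)$ yields $+2\gamma^2$. The paper's own proof, with $c_{FT}=\dots+\gamma_{FT}^2$ and $k=4c_{FT}^2$, implies the same. The $+\gamma^2$ in the displayed $k_d$ is a slip, equivalent to rescaling $\gamma\to\sqrt2\,\gamma$.
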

\begin{proof}
    \citet{azulay2021implicit} proved that $c_{PT}$ (when defined as $c_{PT}=w^+w^-+v^+v^-$) and $\lambda_{PT}$ are conserved throughout training \citep[see also][]{marcotte_abide}. Extended calculations allow us to recover the individual layers' weights from the effective network weights after pretraining. Details are relegated to Appendix~\ref{app:implicit_bias} due to space constraints.
\end{proof}

Theorem \ref{thm:implicit_bias} shows that the function selected by the network depends on $\lambda_{PT}$, $c_{PT}$, and $\gamma_{FT}$ in a non-trivial manner. Importantly, this implicit bias also depends on $\hat{\beta}_{PT}$, implying that the learned solution potentially depends on the pretraining task. Notably, even though in diagonal linear networks, the relative scale $\lambda_{PT}$ does not impact the inductive bias of pretraining, it does impact the learned hidden representation and therefore the inductive bias of fine-tuning. In Section~\ref{sec:learning_regime_lmit} we show how these dependencies shape the resulting learning regime.
This substantially extends results by \citet{lipplinductive}, which focused on $c_{PT}\to0$ with $\lambda_{\text{PT}}=0,\gamma_{FT}=0$. We will show that this only captures a subset of the range of learning regimes we identify. In particular, and perhaps surprisingly, we will see that even for very small $c_{PT}$, different values for $\lambda_{PT}$ can fundamentally change the generalization behavior.

\subsection{Replica Theory of the Generalization Error}
\label{sec:replica}

The replica method is a tool from statistical physics that characterizes the typical behavior of high-dimensional estimation problems in the limit $D\to\infty$ \citep{edwards1975theory,mezard1987spin}. Rather than analyzing the full joint distribution of estimates $\hat{\beta}_{FT}\in\mathbb{R}^D$, the method reduces the problem to a scalar denoising problem governed by a small number of effective parameters, whose values are determined self-consistently by fixed-point equations. Although non-rigorous in general, replica-theoretic predictions have been confirmed by rigorous approaches in a range of settings \citep{bayati2011dynamics,bayati2011lasso} and have become a standard tool for characterizing the generalization error of high-dimensional regression problems under structured priors and penalties \citep{guo2005randomly,rangan2009asymptotic,bereyhi2018maximum,bereyhi2019statistical}.

% In practice, this implicit bias manifests in the sample efficiency and generalization performance observed during fine-tuning. To understand the impact of the implicit bias derived in Theorem~\ref{thm:implicit_bias} on generalization, we now turn to solving the corresponding optimization problem.
 
To apply this framework to our setting, we first reformulate the implicit bias derived in Theorem~\ref{thm:implicit_bias} as an explicit optimization problem: %whose solution characterizes fine-tuning generalization:
\begin{align}
\begin{split}
    \hat{\beta}:=\arg\min_{\beta\in\mathbb{R}^D}\frac{1}{2\lambda}\|\mathbf{X}_{FT}\beta-\mathbf{y}_{FT}\|_2^2+Q_k(\beta_{FT}),
    \label{eq:opt}
\end{split}
\end{align}
where $\beta\in\mathbb{R}^D,k\in\mathbb{R}_+^D$. For $\lambda\to0$, this precisely characterizes the implicit inductive bias of a diagonal linear network trained on the training data $(\mathbf{X}_{FT},\mathbf{y}_{FT})$. We now characterize how accurately this optimization problem can recover the ground-truth vector $\beta^*_{FT}\in\mathbb{R}^D$, given the label
\begin{equation}
    \mathbf{y}_{FT}=\mathbf{X}_{FT}\beta^*_{FT}+\varepsilon,\quad
    \varepsilon\sim\mathcal{N}(0,\sigma_0^2).
    \label{eq:ass1}
\end{equation}
While our theory also applies to $\sigma_0^2>0$, we will focus on the noiseless case $\sigma_0^2=0$. We assume that $\beta_{FT}^*,\beta_{PT}^*$ are sampled from (\ref{eq:task-1}-\ref{eq:task-3}) and characterize the generalization error in the high-dimensional limit: % This connects our setting to the setting studied in \citet{bereyhi2018maximum}, allowing us to characterize the generalization error in the high-dimensional limit.
\begin{proposition}
\label{prop:replica}
    Let $N_{FT}\to\infty$ and $N_{FT}/D\to\alpha_{FT}>0$. Then, under the replica assumption, the estimation problem \eqref{eq:opt} decouples into a scalar problem
\begin{equation}
    \hat{\beta}^{\text{sc}}(y;k,\theta):=\arg\min_{\beta\in\mathbb{R}}\left\{\frac{(y-\beta)^2}{2\theta}+Q_k\left(\beta\right)\right\}.
\end{equation}
Specifically, $(\beta_d^*,\hat{\beta}_d,k_d)$ converge in distribution to
\begin{equation}
    \hat{\beta}_d=\hat{\beta}^{\text{sc}}(\beta^*_d+\eta;k_d,\theta),\quad\eta\sim\mathcal{N}(0,\theta_0).
\end{equation}
Here $\theta$ and $\theta_0$ are the ``effective'' regularization and noise parameters. They are not only governed by the external noise and regularization, but also by the noise and regularization induced from estimating the other parameters. These two properties are governed by the fixed-point equations
\begin{align}
    p&=\sum_{j=1}^J\pi_j\mathbb{E}_{\beta^*,k,\eta}\left[(\hat{\beta}^{\text{sc}}(\beta^*+\eta;k,\theta)-\beta^*)^2\right],\\
    \chi&=\theta\sum_{j=1}^J\pi_j\mathbb{E}_{\beta^*,k,\eta}\left[\partial_y\hat{\beta}^{\text{sc}}(\beta^*+\eta;k,\theta)\right],\label{eq:ass3}
\end{align}
where  $\theta=\frac{\lambda+\chi}{\alpha},\quad
    \theta_0=\frac{\sigma_0^2+p}{\alpha}. $
\end{proposition}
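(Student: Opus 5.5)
The plan is to obtain Proposition~\ref{prop:replica} as an instance of the replica analysis of Bereyhi et al.\ for regularized least squares with separable, coordinate-dependent penalties. It is not derived from scratch. The steps are as follows.

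\textbf{Step 1: recast \eqref{eq:opt} as a Gibbs measure.} Write the problem as the zero-temperature limit of
\begin{equation}
p_\beta(\beta)\propto\exp\Bigl(-\beta_{\mathrm{inv}}\bigl[\tfrac{1}{2\lambda}\|\mathbf{X}_{FT}\beta-\mathbf{y}_{FT}\|_2^2+Q_k(\beta)\bigr]\Bigr),
\end{equation}
with $\beta_{\mathrm{inv}}\to\infty$. We need four facts:
\begin{itemize}
\item the penalty $Q_k=\sum_d q_{k_d}$ is separable across coordinates;
\item $q_k$ is strictly convex, since $q_k''(z)=1/\sqrt{k+4z^2}>0$, so the minimizer is unique;
\item $q_k$ grows like $|z|\log|z|$, so the partition function is finite;
\item the triples $(\beta^*_d,\beta^*_{PT,d},k_d)$ are i.i.d.\ across $d$. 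This holds because, for $\alpha_{PT}\ge1$, $\hat\beta_{PT}=\beta^*_{PT}$, and $k_d$ is a deterministic function of $\hat\beta_{PT,d}$ through \eqref{eq:k-law}. The joint law is therefore the $J$-group mixture with weights $\pi$.
\end{itemize}
This places us in the setting of ``MAP estimation with non-uniform penalties'' of \citet{bereyhi2018maximum,bereyhi2019statistical}. There, the pair (true coefficient, penalty parameter) is drawn from a mixture and $\mathbf{X}$ has i.i.d.\ $\mathcal N(0,1/D)$ entries.

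\textbf{Step 2: replica computation of the free energy.} Compute $\mathbb{E}\log Z$ via $\lim_{n\to0}\partial_n\log\mathbb{E}Z^n$. The sub-steps are:
\begin{itemize}
\item Average over $\mathbf{X}_{FT}$ and the label noise. The Gaussian design means the energy depends on the replicas only through the overlaps $Q_{ab}=\frac1D\sum_d(\beta^a_d-\beta^*_d)(\beta^b_d-\beta^*_d)$.
\item Introduce conjugate variables and take the saddle point as $D\to\infty$ with $N_{FT}/D\to\alpha_{FT}$.
\item Impose the replica-symmetric ansatz $Q_{aa}=p+\chi/\beta_{\mathrm{inv}}$ and $Q_{ab}=p$.
\item Let $\beta_{\mathrm{inv}}\to\infty$.
\end{itemize}
The interaction term then reduces to $\frac{\alpha}{2}\frac{\sigma_0^2+p}{\lambda+\chi}$ together with a term linear in $\chi$. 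The entropic term factorizes over coordinates into a scalar Moreau-envelope problem. In that problem, coordinate $d$ sees the observation $\beta^*_d+\eta$ with $\eta\sim\mathcal N(0,\theta_0)$ and a quadratic coupling of strength $1/\theta$. This yields $\hat\beta^{\mathrm{sc}}$ with $\theta=(\lambda+\chi)/\alpha$ and $\theta_0=(\sigma_0^2+p)/\alpha$, which is the decoupling claimed in the statement.

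\textbf{Step 3: fixed-point equations and convergence in distribution.} Stationarity of the replica-symmetric free energy gives the two conditions.
\begin{itemize}
\item The derivative with respect to $\theta_0$ fixes $p$ as the mean squared error of the scalar estimator, averaged over the mixture $\pi$.
\item The derivative with respect to $\theta$ gives $\chi$. Using Stein's lemma on the Gaussian $\eta$, the cross term $\mathbb{E}[\eta(\hat\beta^{\mathrm{sc}}-\beta^*)]/\theta_0$ turns into $\mathbb{E}[\partial_y\hat\beta^{\mathrm{sc}}]$, which gives \eqref{eq:ass3}.
\end{itemize}
Convergence in distribution of $(\beta^*_d,\hat\beta_d,k_d)$ follows by the usual trick: add a source term $h\sum_d\phi(\beta^*_d,\hat\beta_d,k_d)$ to the energy and differentiate the free energy with respect to $h$ at $h=0$. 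The trick produces exactly the scalar-channel expectation.

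\textbf{Main obstacle.} The main difficulty is justifying that Bereyhi et al.'s decoupling applies with a non-standard, coordinate-dependent penalty $q_{k_d}$. Two issues arise.
\begin{itemize}
\item \emph{Replica symmetry.} The cleanest argument is that for convex, separable penalties with a Gaussian design, replica symmetry is expected to be exact. Strict convexity of every $q_{k_d}$ holds uniformly provided $k_d$ stays bounded, and $k_d$ is bounded in our model because $\hat\beta_{PT}$ takes finitely many values per group.
\item \emph{The interpolation limit $\lambda\to0$.} We need the fixed point to remain well-defined, i.e.\ $\chi$ stays finite, both in the underdetermined regime $\alpha_{FT}<1$ and when $\lambda\to0$. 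We would handle this by working at $\lambda>0$ and passing to the limit at the end, noting that the minimum-$Q_k$ interpolator is the limit of the penalized solutions.
\end{itemize}
Everything is stated under the replica assumption, so full rigor is not claimed. As a consistency check, the formulas should reduce to the known LASSO formulas when $q_k$ is replaced by an $\ell_1$ penalty.
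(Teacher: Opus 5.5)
Your proposal follows essentially the same route as the paper. Both obtain the statement as a specialization of Proposition~1 of Bereyhi et al.\ under the replica assumption, and both get the $\chi$ equation by applying Stein's lemma to the cross term $\mathbb{E}[(\hat{\beta}^{\text{sc}}-\beta^*)\eta]$.

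The differences are minor. You derive the closure relations $\theta=(\lambda+\chi)/\alpha$ and $\theta_0=(\sigma_0^2+p)/\alpha$ by doing the Gaussian average over $\mathbf{X}_{FT}$ directly, whereas the paper reads them off Bereyhi et al.'s general formula using the R-transform $R_X(\omega)=\alpha/(1-\omega)$. The paper also treats $(k_d)$ as a deterministic sequence whose coordinate averages converge to the group mixture, which does not need the i.i.d.\ structure you justify only for $\alpha_{PT}\ge 1$.
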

\begin{proof}
    The proposition follows from Proposition 1 in \citet{bereyhi2018maximum}, see Appendix~\ref{app:replica}.
\end{proof}
This proposition, paired with Theorem~\ref{thm:implicit_bias}, yields an exact formula for the generalization error associated with PT+FT in diagonal linear networks, as a function of weight initialization, task parameters, and data scale. As we will show in Section~\ref{sec:learning_regime}, different initialization choices induce distinct learning regimes, favoring different task structures.

\section{Understanding Learning Regimes in PT+FT}
\label{sec:learning_regime}

\begin{figure}
    \centering
    \includegraphics[width=\linewidth]{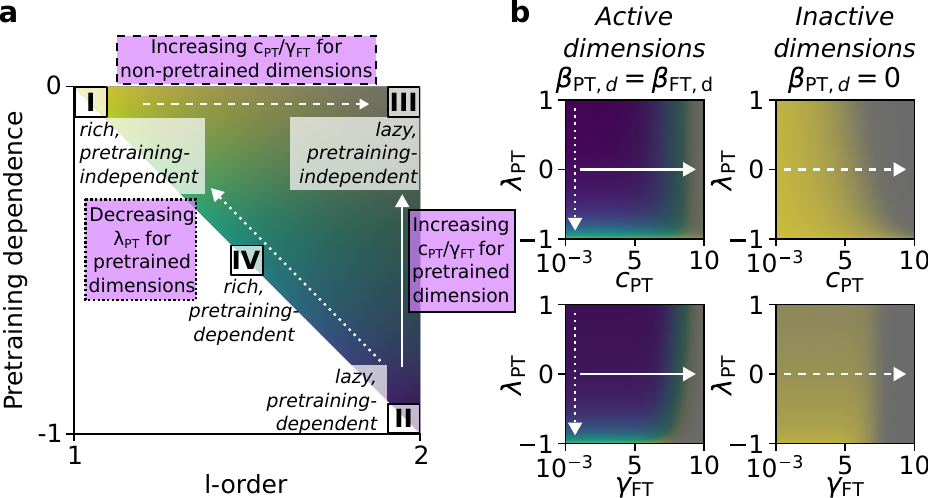}
    \caption{\textbf{Implicit bias and learning regimes of fine-tuning.} (a) $\ell$-order and pretraining dependence jointly define four different learning regimes in PT+FT. Different initialization parameters induce changes between these learning regimes, as indicated by the arrows. (b) We can interpolate between these four regimes by shifting the initialization parameters ($\hat{\beta}_{FT,d}=1/\sqrt{\rho_{FT}}$, $\rho_{FT}=0.1$). (For a color legend, see panel (a).) Crucial transitions, which we further highlight in the text and in panel (a), are indicated by the arrows. Simulation details can be found in Appendix \ref{app:implementation}.}
    \label{fig:regimes}
\end{figure}

The theoretical insights developed in Section~\ref{sec:Theory} help us understand the inductive bias of PT+FT across the full spectrum of weight initialization and task parameters. We first use Theorem~\ref{thm:implicit_bias} to characterize how the initialization parameters influence the network's learning regime (Sections~\ref{sec:learning_regime_lmit} and \ref{sec:full_phase}) and then use Proposition~\ref{prop:replica} to tie these different learning regimes to the task parameters for which they are favorable, both for perfect and imperfect pretraining (Section~\ref{sec:gen-error}). %In particular, we highlight the impact of $\lambda_{c_{pt}}$ in heling us 

%We show how insights derived from the implicit bias directly inform the generalization error through the replica analysis, recovering the four learning regimes previously identified.
%Additionally, we demonstrate how these initialization parameters interact with task characteristics, including the relative feature similarity and sparsity of the tasks.
\subsection{Learning Regimes in the Limit}
\label{sec:learning_regime_lmit}
Building on Theorem \ref{thm:implicit_bias}, we characterize the learning regime of the networks using two measures (introduced in \citet{lipplinductive}):
\begin{enumerate}
    \item The $\ell\text{-order}:=\tfrac{\partial\log q_{k_d}(\beta_{FT,d})}{\partial\log|\beta_{FT,d}|}$, measures whether the network benefits from sparsity. $\ell\text{-order}=1$ indicates a sparse inductive bias (e.g.\ the $\ell_1$-norm), whereas $\ell\text{-order}=2$ corresponds to a non-sparse inductive bias (e.g.\ the $\ell_2$-norm).
    \item The pretraining dependence\footnote{\citet{lipplinductive} introduced this metric as feature dependence; we call it pretraining dependence to emphasize that it characterizes the dependence on the pretraining function directly.}, $\text{PD}:=\tfrac{\partial\log q_{k_d}(\beta_{FT,d})}{\partial\log|\beta_{PT,d}|}$ measures whether the network benefits from pretrained features. $\text{PD}=-1$ indicates that the penalty is inversely proportional to the magnitude of the pretrained feature $|\beta_{PT,d}|$, whereas $\text{PD}=0$ indicates that the penalty does not depend on it at all. %While the $\ell$-order arises as a relevant quantity in single-task learning as well, PD only arises in a multi-task setting.
\end{enumerate}
%The $l$-order is a well-known metric for characterizing the learning regime and has been linked to the rich and lazy regimes through $l_1$ and $l_2$ norms respectively \cite{azulay2021implicit}.  
%In contrast, pretraining dependence arises from the multi-task/fine-tuning setup, which in turn influences the learning regime, first introduced in \cite{lipplinductive} \footnote{\citet{lipplinductive} used the term feature dependence; we call the metric pretraining dependence, as we are interested in the dependence on the pretraining function rather than the hidden features.}.

We derive analytical formulas for these metrics as a function of both the initialization parameters and $\beta_{PT,d}$ and $\beta_{FT,d}$ (see Appendix~\ref{app:l-FD}).
We further identify a universal tension between $\ell$-order and PD:
 
\begin{proposition}
\label{prop:l-fd}
    Across the full spectrum of initialization,
       $ \ell\text{-order}\in[1,2],\,
        \text{PD}\in[-1,0],\,
        \ell\text{-order}+\text{PD}\in[1,2].$
\begin{proof}
    We derive these ranges in Appendix~\ref{app:l-FD}. They extend a finding from \citet{lipplinductive} who found that $\ell\text{-order}+\text{PD}=1$ for $c_{PT}\to0,\lambda_{PT},\gamma_{FT}=0$.
\end{proof}
\end{proposition}
\begin{figure*}
\centering
    \includegraphics[width=\linewidth]{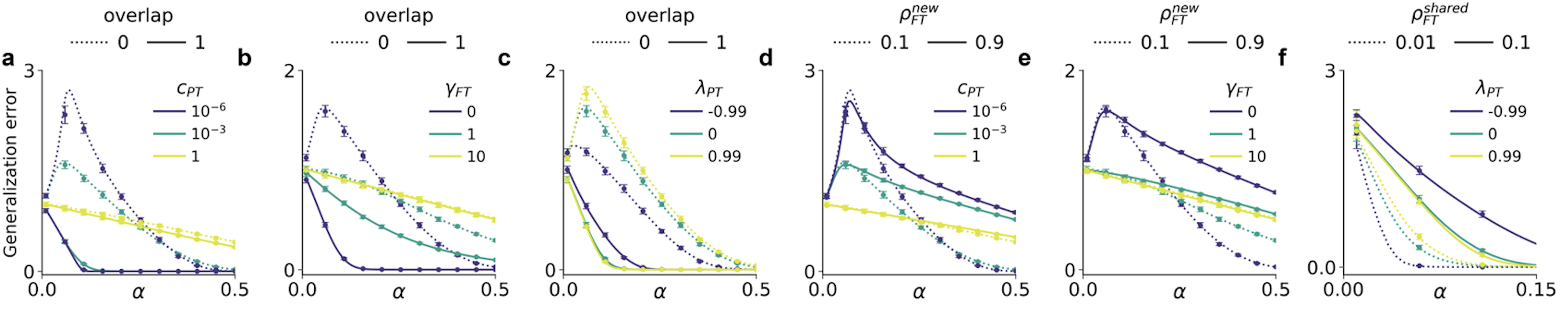}
\caption{\textbf{Generalization curves for different initialization parameters and task parameters.} The generalization error $\mathcal{E}$ as a function of the data scale $\alpha_{FT}$. Lines depict replica predictions and points depict the results of our empirical simulations ($\pm2$ standard errors). In all cases, $\rho_{PT}=0.1$. We consider $c_{PT}=10^{-3}$, $\lambda_{PT}=0$, and $\gamma_{FT}=0$, varying one initialization parameter for each panel. Simulation details can be found in Appendix \ref{app:implementation}. (a-c) We consider either overlapping ($\rho_{FT}^{\text{shared}}=0.1,\rho_{FT}^{\text{new}}=0$) or distinct ($\rho_{FT}^{\text{shared}}=0,\rho_{FT}^{\text{new}}=0.1$) FT dimensions. (d,e) We consider $\rho_{FT}^{\text{shared}}=0$ and vary $\rho_{FT}^{\text{new}}$. (f) We consider $\rho_{FT}^{\text{new}}=0$ and vary $\rho_{FT}^{\text{shared}}$.}
\label{fig:replica}
\end{figure*}

%\subsection{$\ell$-order and Feature Dependence of the Implicit Bias}
%illustrated in \textbf{Fig.~\ref{fig.1} b}. 

In \textbf{Fig.~\ref{fig:regimes}a}, we identify three limiting regimes arising from the interplay between the $\ell$-order and pretraining dependence. Taking appropriate limits of the initialization parameters $\lambda_{PT}$ and $c_{PT}$ places the model in these regimes, each corresponding to a familiar norm-based regularization:\begin{enumerate}[label=(\Roman*)]
    \item A \textcolor{richfi}{\textbf{rich, pretraining-independent regime}}, (new dimensions learned; $\ell\text{-order} = 1, \text{PD} = 0$). This regime is governed by the $\ell_1$-norm: $Q(\beta_{FT})\to\|\beta_{FT}\|_1$ and is induced e.g.\ in the limit $\lambda_{PT}\to-1,\gamma_{FT}\to0$. We call an $\ell_1$-like inductive bias in diagonal linear networks ``rich,'' because it arises from feature-learning dynamics (see Section~\ref{sec:related}).
    \item A \textcolor{lazyfd}{\textbf{lazy, pretraining-dependent regime}}, (reuse with minimal adaptation; $\ell\text{-order} = 2, \text{PD} = -1$). This regime is governed by the weighted $\ell_2$-norm: $Q(\beta_{FT})\to\sum_{i=1}^D|\beta_{FT,i}|^2/|\beta_{PT,i}|$. It is induced e.g.\ by considering $c_{PT}\to0$ and $\lambda_{PT}\to1$.
    \item A \textcolor{gray30}{\textbf{lazy, pretraining-independent regime}} ($\ell\text{-order} = 2, \text{PD} = 0$). This regime is governed by the unweighted $\ell_2$-norm: $Q(\beta_{FT})\to\|\beta_{FT}\|_2$. It implies that fine-tuning behavior does not depend on pretraining and is induced e.g.\ by $c_{PT}\to\infty$ or $\gamma_{FT}\to\infty$.
\end{enumerate}

We highlight the identification of the novel regime (III), which is accessible only in the fine-tuning setting and highlights that for badly chosen initialization parameters, pretraining will not create transferable features.

\subsection{Full Phase Portrait of the Learning Regimes}
\label{sec:full_phase}
In Section \ref{sec:learning_regime_lmit}, our analysis was restricted to asymptotic behavior. However, in practice we may be operating in an intermediate learning regime where the parameters do not approach one of the above limits. Indeed, operating in such a regime will often confer a beneficial inductive bias: On the one hand, for pretraining to be useful, the penalty should depend on the active pretraining dimensions, i.e.\ ideally $\text{PD}=-1$. On the other hand, a sparse inductive bias substantially improves generalization performance (if the ground truth is also sparse), i.e.\ ideally $\ell\text{-order}=1$. Proposition~\ref{prop:l-fd} highlights a fundamental tradeoff between these desiderata: it is impossible to achieve $\text{PD}=-1$ and $\ell\text{-order}=1$. The set of possible values for the pair $(\ell\text{-order},PD)$ is given by a triangle whose edges are given by regimes (I-III) (\textbf{Fig.~\ref{fig:regimes}b}). We thus highlight an important intermediate regime:

\begin{enumerate}[label=(\Roman*), start=4]
    \item The \textcolor{nested}{\textbf{rich, pretraining-dependent regime}} ($\ell\text{-order}<2$, $PD<0$) achieves a balance between pretraining dependence and $\ell$-order, enabling the model to leverage both sparsity and feature reuse simultaneously. 
\end{enumerate}

Overall, the four learning regimes we highlight are distinguished by their pretraining dependence and sparsity bias/$\ell$-order. To understand how the initialization parameters impact the learning regime, we plot the full phase portrait of the learning regimes as a function of pretraining initialization (\textbf{Fig.~\ref{fig:regimes}b}), considering a typical value at the end of training, $\beta_{FT,d}=1/\sqrt{\rho_{FT}}$ (setting $\rho_{FT}=0.1$ as this will be a common setting in the next section). Note that because of our task generation process, we either have $\beta_{PT,d}^*=\pm 1/\sqrt{\rho_{PT}}$ (for active dimensions; we set $\rho_{PT}=0.1$) or $\beta_{PT,d}^*=0$ (for inactive dimensions). Plotting the phase diagram for both cases clarifies which learning regime we operate in for active and inactive dimensions.
%$\beta_{PT,d}=\beta_{FT,d}=1$. On the other hand, if $\beta_{PT,d}=0$, that immediately implies that $PD=0$, whereas the $\ell\text{-order}$ may still vary. However, $\ell\text{-order}$ is monotonically increasing in $\beta_{PT,d}$, meaning that $\beta_{PT,d}=0$ will necessarily have a smaller $\ell\text{-order}$ than $\beta_{PT,d}=1$.

% \begin{figure*}
% \centering\hfill
% \begin{minipage}{0.7\textwidth}
%     \centering
%     \includegraphics[width=\linewidth]{figures/figure3_row.pdf}
% \end{minipage}\hfill
% \begin{minipage}{0.3\textwidth}
%     \caption{ \textbf{Theoretical Replica Curves match experiments and confirm regime predictions:}
%     expected test MSE curves as a function of the data scale (LOAD?) $\alpha$ for the PT+FT diagonal network setup calculated using the replica method, together with PT+FT averaged MSE results in large dimensional diagonal networks (see LINK APPENDIX). (a-c) varying feature overlap $\omega$ with fixed finetuning sparsity $\rho_{FT} = 0.1$. (d-e) varying finetuning sparsity under fixed overlap $\omega = 1.0$. We consider three different hyper-parameter sweeps: in (a,d) varying $c_{PT}$ with $(\lambda_{PT}, \gamma_{FT})$ fixed, in (b,e) varying $\gamma_{FT}$ with $(\lambda_{PT}, c_{PT})$ fixed, in (c,f) varying $\lambda_{PT}$ with $(\gamma_{FT}, c_{PT})$ fixed. Specific line colours correspond to hyper-parameter combinations, while specific line-styles correspond to changes in $\rho_{FT}$ or $\omega$.}
%     \label{fig:replica}
% \end{minipage}\hfill
% \end{figure*}

For active dimensions where $\beta_{PT,d}^*=\pm1/\sqrt{\rho_{PT}}$, changing the relative scale $\lambda_{PT}$ shifts the network from the \textcolor{lazyfd}{\emph{lazy, pretraining-dependent regime}} (II) to the \textcolor{richfi}{\emph{rich, pretraining-independent regime} } (I), with intermediate scales yielding the \textcolor{nested}{\emph{rich, pretraining-dependent regime}} (IV) (see the dotted line in \textbf{Fig.~\ref{fig:regimes}a,b}). Intuitively, a negative $\lambda_{PT}$ implies that the second-layer weights dominate over the first layer ($v > w$), while a positive $\lambda_{PT}$ indicates the opposite ($w > v$). When $\lambda_{PT}$ is large and negative, the first-layer representation is comparatively small, even after pretraining.
After rescaling the second layer to $\gamma_{FT}=0$, these large second-layer weights are reduced, decreasing the overall scale of the network---this drives the system toward the rich regime. In contrast, as $\lambda_{PT}$ increases, the learned first-layer representation becomes larger, driving the network into the \textcolor{lazyfd}{\emph{lazy, pretraining-dependent regime}}. In contrast, increasing the absolute scale $c_{PT}$ or the fine-tuning re-initialization scale $\gamma_{FT}$ moves the system toward the {\color{gray30}{\emph{lazy, pretraining-independent regime}}} (III), where neither sparsity nor shared dimensions improve generalization (solid line in \textbf{Fig.~\ref{fig:regimes}a,b}).

For inactive dimensions ($\beta_{PT,d}^*=0$), we find that $\text{PD}=0$. Thus, instead of the two-dimensional continuum outlined above, we recover a one-dimensional continuum between the {\color{gray30}{\emph{lazy, pretraining-independent regime}}} (III) and the \textcolor{richfi}{\emph{rich, pretraining-independent regime}} (I) (\textbf{Fig.~\ref{fig:regimes}b}, right column). Increasing $\gamma_{FT}$ or $c_{PT}$ shifts the inductive bias from regime (I) into regime (III) (see the dashed line in \textbf{Fig.~\ref{fig:regimes}a,b}). Additionally, (albeit less pronounced), increasing $\lambda_{PT}$ also shifts the inductive bias into a slightly lazier regime.
\begin{figure*}[t]
\centering
\includegraphics[width=\linewidth]{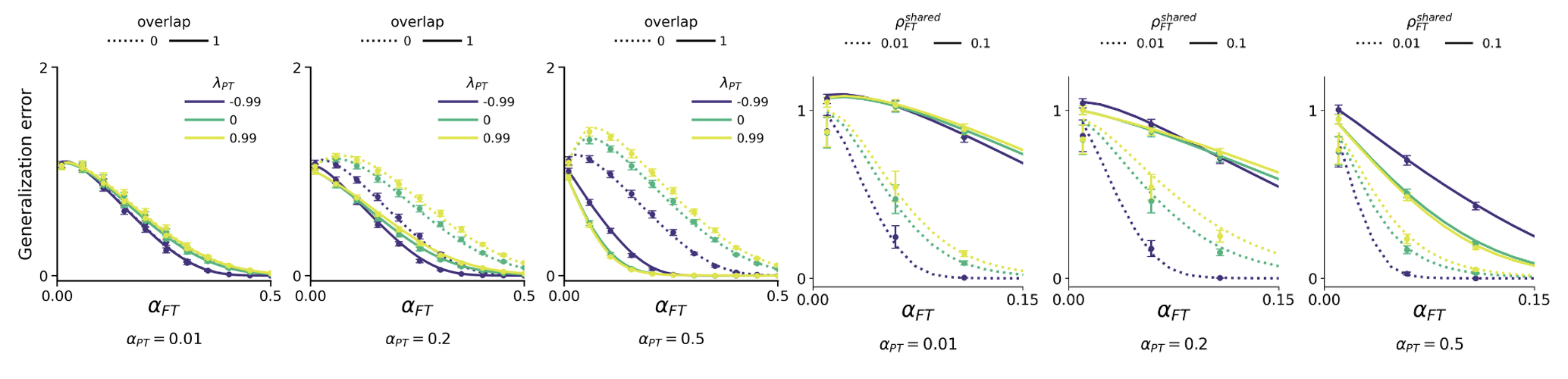}
\caption{\textbf{Effect of pretraining strength ($\alpha_{PT}$) on the role of relative scale ($\lambda_{PT}$).} Generalization error as a function of fine-tuning data scale $\alpha_{FT}$ for different values of $\alpha_{PT}$ and task overlap. For small $\alpha_{PT}$, behavior is largely insensitive to $\lambda_{PT}$. As $\alpha_{PT}$ increases, the effect of $\lambda_{PT}$ becomes stronger, particularly in the no-overlap setting. For $\alpha_{PT} \geq 1$, all choices of $\lambda_{PT}$ perform similarly under full overlap. Lines depict replica predictions; points depict empirical simulations ($\pm 2$ standard errors).}
\label{fig:imperfect_pt}
\end{figure*}

Overall, the different initialization parameters can therefore change the inductive bias of PT+FT along three axes: they can control 1) whether the network benefits from shared dimensions with the pretraining task (measured by the $\text{PD}$ for active dimensions), 2) whether the network benefits from sparsity in the new active dimensions (measured by the $\ell\text{-order}$ for $\beta_{PT,d}^*=0$), and 3) whether the network benefits from sparsity in the shared active dimensions (measured by the $\ell\text{-order}$ for $\beta_{PT,d}^*>0$). In the next section, we leverage Proposition~\ref{prop:replica} to study how these different initializations impact generalization performance across task parameters.
%While in the limit regimes (I-III) these metrics are constant for all $\beta_{FD,d}$ and $\beta_{PT,d}$, this is not true outside of these limits. 
%. Notably, our results identify a universal tension between $\ell\text{-order}$ and FD. This extends earlier findings reported in \citet{lipplinductive}, which identify the tradeoff $\ell\text{-order}+\text{FD}=1$ for the subset of parameters $c_{PT},\gamma\to0$. 
% (IV). This regime achieves a balance between these two extremes, enabling the model to leverage both sparsity and feature reuse simultaneously (see empirical experiments below).
%We vary cpt _ lamdab and gamma to vary the leanig  regime
\subsection{Learning Regime and Task Parameters Jointly Determine Generalization Error Curves}
% \subsection{Replica Theoretical Characterization of the Generalization Error}
\label{sec:gen-error}
In practice, the implicit bias described above is reflected in the sample efficiency and generalization performance observed during fine-tuning. We solve the fixed-point equations derived in Proposition~\ref{prop:replica} to understand how the different learning regimes and task parameters impact generalization performance. In particular, we examine the impact of the different initialization parameters on the three axes outlined above. To validate our analytical results, we additionally directly train diagonal linear networks. %We additionally directly train diagonal linear networks in a PT+FT setup to validate our analytical characterization with empirical simulations. %To study the effect of the implicit bias derived in Theorem~\ref{thm:implicit_bias} on generalization, we construct a replica method in Section~\ref{sec:replica}. Similar to Theorem~\ref{thm:implicit_bias}, this framework allows us tpretrainingo examine how the initialization parameters—$c_{PT}$, $\lambda_{PT}$, and $\gamma_{FT}$—shape fine-tuning behavior. Furthermore, it enables us to analyze the roles of task sparsity and overlap, captured by the parameters $\rho_{PT}$, $\rho_{FT}^{\text{shared}}$, and $\rho_{FT}^{\text{new}}$, which were previously limited due to the per-dimension analysis of the implicit bias proof. Finally, we can also further investigate the data scale parameters, $\alpha_{PT}$ and $\alpha_{FT}$. We examine their combined effect on generalization during fine-tuning and verify that the theoretical curve derived in \ref{thm} match the experimental curve. 

%We begin by considering the single-task learning (STL) case. In this case, the generalization curves predicted by the replica theory interpolate between the generalization curves arising from $\ell_2$-regularization and the generalization curves arising from $\ell_1$-regularization. Thus, for a sparse ground truth (e.g.\ $\rho=0.1$), a small $c$ yields much better performance, recovering a classical result from the compressed sensing literature \citep{donoho2006compressed,candes2006near,donoho2009message}.

%\textit{Sam: I'm going to draft a suggestion for a potential revision here. I will be copying stuff from Clem but want to leave her draft below, so we can compare and integrate.}

%In the PT+FT setup, we have seen from analyzing the different learning regimes above, that we draw a key distinction according to 1) pretraining dependence in pretrained features, 2) sparsity bias in non-pretrained features, and 3) sparsity bias in pretrained features. We therefore now illustrate how comparing generalization for different initialization parameters elucidates the regime that they move us into.
\label{sec:emp_resuts}

\textbf{Do We Benefit From Shared Dimensions?}
If our penalty induces pretraining dependence, we should benefit from sharing dimensions between pretraining and fine-tuning. We therefore consider a fixed $\rho_{PT}=0.1$ and compare the case $\rho_{FT}^{\text{shared}}=0.1,\rho_{FT}^{\text{new}}=0$ (in which case the pretraining and fine-tuning task share all their dimensions) to the case $\rho_{FT}^{\text{shared}}=0,\rho_{FT}^{\text{new}}=0.1$ (in which case they share no dimensions, but the overall sparsity level is matched). As $c_{PT}$ and $\gamma_{FT}$ decrease, performance becomes increasingly sensitive to task overlap (\textbf{Fig.~\ref{fig:replica}a,b}). On the other hand, decreasing $\lambda_{PT}$ decreases the extent to which we benefit from shared dimensions (\textbf{Fig.~\ref{fig:replica}c}). These trends are consistent with the phase-portrait analysis, which indicates that for pretrained dimensions, decreasing $\gamma_{FT}$ and $c_{PT}$ at fixed $\lambda_{PT}$ shifts the network from a \textcolor{gray30}{\emph{lazy, pretraining-independent regime}} toward a \textcolor{lazyfd}{\emph{lazy, pretraining-dependent regime}} (solid line in \textbf{Fig.~\ref{fig:regimes}}). In contrast, decreasing $\lambda_{PT}$ shifts the model from a \textcolor{lazyfd}{\emph{lazy, pretraining-dependent regime}} to a \textcolor{richfi}{\emph{rich, pretraining-independent regime}} (dotted line in \textbf{Fig.~\ref{fig:regimes}}).

\textbf{Do We Benefit From Sparsity In New Active Dimensions?}
For inactive dimensions (i.e.\ $\beta_{PT,d}^*=0$), we can either be in a rich regime where we benefit from a sparse set of new dimensions, or a lazy regime where we learn sparse and non-sparse sets equally well. We therefore consider a task with no shared dimensions with the pretraining task ($\rho_{PT}=0.1,\rho_{FT}^{\text{shared}}=0$) and compare performance across different levels of sparsity on the fine-tuning task ($\rho_{FT}=0.1,0.9$). We find that as $c_{PT}$ and $\gamma_{FT}$ decrease, sparsity becomes increasingly beneficial for performance (\textbf{Fig.~\ref{fig:replica}d,e}). This observation aligns with the phase-portrait analysis, which shows that for $\beta_{PT,d}^* = 0$ the model transitions from a \textcolor{gray30}{\emph{lazy, pretraining-independent regime}}, which has no sparsity bias, to a \textcolor{richfi}{\emph{rich, pretraining-independent regime}}, which has a sparsity bias (dashed line in \textbf{Fig.~\ref{fig:regimes}}).

\textbf{Do We Benefit From Sparsity In Shared Active Dimensions?}
Rich learning regimes should also benefit from sparsity in pretrained dimensions (with $\beta_{PT,d}^*=\pm1/\sqrt{\rho_{PT}}$). We therefore consider fully overlapping dimensions ($\rho_{PT}=0.1$, $\rho_{FT}^{\text{new}}=0$) and compare $\rho_{FT}^{\text{shared}}=0.1$ to $\rho_{FT}^{\text{shared}}=0.01$. We observe that as $\lambda_{PT}$ decreases, the network increasingly benefits from sparsity (\textbf{Fig.~\ref{fig:replica}f}). However, this effect vanishes if $c_{PT}$ or $\gamma_{FT}$ are very large. This observation again aligns with the phase-portrait analysis: as $\lambda_{PT}$ decreases, we move into the \textcolor{nested}{\emph{rich, pretraining-dependent regime}} (dotted line in \textbf{Fig.~\ref{fig:regimes}}), but if $c_{PT}$ or $\gamma_{FT}$ increase, we move into the \textcolor{gray30}{\emph{lazy, pretraining-independent regime}} regardless of $\lambda_{PT}$ (solid line in \textbf{Fig.~\ref{fig:regimes}}).

\textbf{The Role of Pretraining Strength.}
\label{sec:imperfect_pt}
So far, we have focused on the case $\alpha_{PT} \geq 1$, in which pretraining perfectly recovers the ground truth. In practice, however, pretraining is often imperfect. We therefore extend our analysis to $\alpha_{PT} < 1$, where $\hat{\beta}_{PT}$ deviates from $\beta^*_{PT}$.

Intuitively, $\alpha_{PT}$ controls the accuracy of the active dimensions learned during pretraining, while $\lambda_{PT}$ continues to determine whether these dimensions are reused or adapted during fine-tuning. In \textbf{Fig.~\ref{fig:imperfect_pt}}, we vary $\alpha_{PT}$ across different levels of task overlap. For small $\alpha_{PT}$, behavior is largely insensitive to $\lambda_{PT}$, as the pretraining signal is too weak to be reused. As $\alpha_{PT}$ increases, the effect of $\lambda_{PT}$ becomes progressively stronger, particularly in the no-overlap setting where fine-tuning must learn new features. On the other hand, while for $\alpha_{PT} \geq 1$ negative $\lambda_{PT}$ makes generalization worse under full overlap (\textbf{Fig.~\ref{fig:replica}f}), for $\alpha_{PT} < 1$ it can improve performance even in that case, as it allows the model to refine the imperfectly learned pretraining structure. This highlights that imbalanced initialization may be especially important when pretraining data is limited.

\textbf{Summary.} Taken together, the generalization curves elucidate how initialization parameters interact with task parameters to shape generalization behavior. In particular, the generalization curves demonstrate the same trade-off between sparsity and pretraining dependence we predicted from Proposition~\ref{prop:l-fd}: as $\lambda_{PT}$ becomes more negative, we move along the diagonal representing the trade-off between $\ell$-order and pretraining dependence (dotted line in \textbf{Fig.~\ref{fig:regimes})}. As a result, the network benefits less from overlap: for $\rho_{FT}^{\text{shared}}=0.1$ (i.e.\ full overlap), decreasing $\lambda_{PT}$ makes generalization worse (\textbf{Fig.~\ref{fig:replica}f}). At the same time, the network benefits more from sparsity: for $\rho_{FT}^{\text{shared}}=0.01$, trading off a stronger sparsity bias for a weaker pretraining dependence is worthwhile, and a smaller $\lambda_{PT}$ yields better performance. Overall, this identifies $\lambda_{PT}$ as a crucial factor for changing the inductive bias of PT+FT and highlights that the optimal learning regime will depend on the task parameters (\textbf{Fig.~\ref{fig:replica_2a}}).

\begin{figure}[t]
    \centering
    \includegraphics[width=0.48\textwidth]{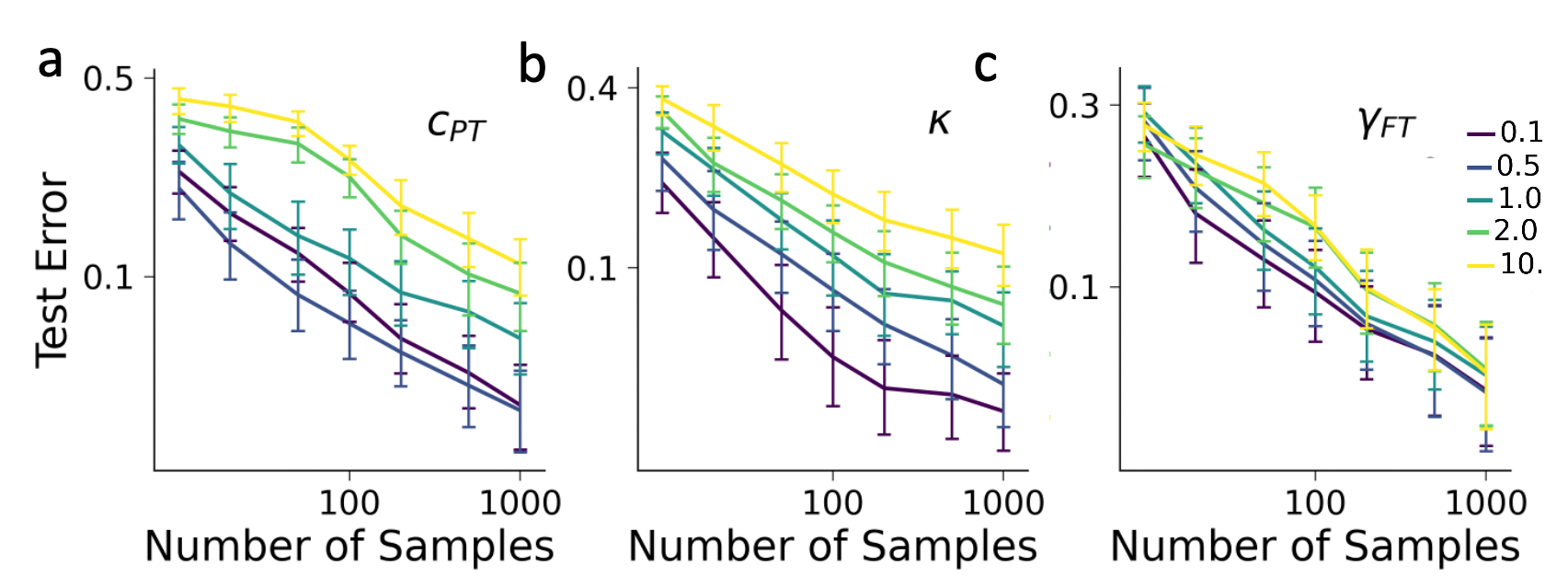}
    \caption{\textbf{ResNet CIFAR-100.} Generalization performance as a function of the number of samples and initialization parameters. We vary (a) the absolute scale of initialization by multiplying all weights in the network by $c_{PT}$, (b) the relative scale of initialization by multiplying the first three blocks of the ResNet by $\kappa$ (equivalent to $\lambda_{PT}$), and (c) the readout initialization by multiplying the readout by $\gamma_{FT}$ (Simulation details: Appendix \ref{app:implementation}).}
    \label{fig:experiments}
\end{figure}
\section{Large-Scale Vision Experiments}
\label{sec:emp_results}
Our theoretical analysis and small-scale experiments suggest that modifying pretraining and re-initialization parameters provides a theoretically principled mechanism for inducing useful feature learning during fine-tuning. To evaluate this prediction, we pretrain a ResNet on 98 classes from CIFAR-100 and subsequently fine-tune it on the remaining two classes. The class split is sampled randomly and repeated fifty times. We design three experimental settings that translate the theoretical parameters $\lambda_{PT}$, $c_{PT}$, and $\gamma_{FT}$ to a realistic deep-network architecture. The ResNet-18 architecture consists of an initial embedding layer followed by four stages of residual blocks. To induce imbalance across network layers, we upscale the embedding layer and the first three residual stages by a constant factor $\kappa$. To implement an equivalent notion of $c_{PT}$ scaling in this architecture, we scale all network parameters by a constant factor. Finally, to implement an analogue of $\gamma_{FT}$, we rescale the parameters of the final classification layer after pretraining\footnote{For completeness, we report in Appendix~\ref{app:experiemnt} the heuristic proposed by \citet{lipplinductive}}. Further experimental details are provided in Appendix~\ref{app:experiemnt}.

In  \textbf{Fig.~\ref{fig:experiments}a}, we observe that a ResNet initialized with a non-standard small value of $\kappa$—where $\kappa<1$ corresponds to a negative relative scaling—exhibits improved generalization during fine-tuning. We note that the ordering between individual $\kappa$ values is not always consistent (e.g. the blue and violet lines in \textbf{Fig.~\ref{fig:experiments}a}), though these differences are not statistically significant. This may reflect a trade-off between initialization scale and optimizer dynamics, potentially compounded by the fact that sparsity and overlap are not explicitly controlled in this setting. Similarly, as shown in \textbf{Fig.~\ref{fig:experiments}b}, increasing the scale of $c_{PT}$ also leads to degraded fine-tuning generalization, in agreement with our theoretical predictions. Finally, \textbf{Fig.~\ref{fig:experiments}c} shows that decreasing the scale of $\gamma_{FT}$ improves fine-tuning generalization for intermediate sample sizes. Overall, all parameters identified by the theory have a meaningful impact on generalization during fine-tuning in practice.

To verify that these interventions induce the \textcolor{nested}{\emph{rich, pretraining-dependent regime}} predicted by our theory, we conduct a representation-level analysis. We measure the dimensionality of the network representation before and after fine-tuning using the participation ratio \citep[PR;][]{gao2017theory}, and the number of dimensions shared between the two states using the effective number of shared dimensions \citep[ENSD;][]{giaffar2024effective}. The pretraining-dependent rich regime predicts that fine-tuning should compress the representation, i.e.\ $\mathrm{PR}(X_{FT}) < \mathrm{PR}(X_{PT})$, while preserving its alignment with the pretrained representation, i.e.\ $\mathrm{ENSD}(X_{PT}, X_{FT}) \approx \mathrm{PR}(X_{FT})$. In Appendix~\ref{app:experiemnt}, we show that all three interventions ($\kappa$, $c_{PT}$, $\gamma_{FT}$, as well as the $c_{FT}$ rescaling heuristic of \citet{lipplinductive}) produce this signature in the final ResNet layer. We further investigate the robustness of our results by fine-tuning on a different dataset (SVHN; we use this to probe the impact of decreased task similarity, Appendix~\ref{app:similarity}), running experiments with different optimizers and data splits (Appendices~\ref{app:data_split} and~\ref{app:optimizer}), and exploring the impact of balancedness on Transformers pretrained and fine-tuned on modular addition (Appendix~\ref{app:transformers}).

% In Appendix~\ref{app:experiemnt}, we provide a representation-level analysis of the three settings, finding that all three settings induce a key representational signature indicative of the \textcolor{nested}{\emph{pretraining-dependent rich regime}}. 

% However, these results should be interpreted with caution when extrapolating to large-scale architectures, as several key differences, including depth and nonlinearities, may affect their applicability (discussed in Appendix~\ref{app:experiemnt}).

Several limitations merit discussion when interpreting these results. Our theoretical analysis is developed in a diagonal two-layer setting, while ResNets are nonlinear and include residual connections and batch normalization. Our experiments also do not explicitly control sparsity levels or feature overlap. We leave a more careful study of these parameters in practical networks to future work.

\section{Conclusion}
Despite the prevalence of the pretraining--fine-tuning (PT+FT) pipeline in modern deep learning, a comprehensive understanding of the inductive bias of PT+FT, and how it relates to initialization structure and task parameters, has remained elusive.
%\FL{You can smoothen this sentence a little:}The field still lacks a comprehensive understanding of how initialization structure and task characteristics jointly shape the inductive bias in the pretraining--fine-tuning (PT+FT) paradigm.
Here we developed an end-to-end theory of PT+FT in diagonal linear networks, analytically computing the generalization error as a function of initialization parameters, task parameters, and data scale.
%\FL{Maybe you can delete the repetition on the diagonal networks and say, although our analysis is based on a simplified neural network model, it provides ... alike}
Albeit based on a simplified neural network model, our analysis provides quantitative insights relevant to practical scenarios in machine learning and neuroscience. In particular, it underscores the importance of relative weight scales across layers. Our work also opens up several directions for future investigation. A similar analysis could be conducted in a continual learning setting, allowing us to characterize the conditions on task similarity and initialization that support forward and backward transfer. Extending the framework to nonlinear neural networks would allow it to capture richer forms of feature overlap and could help uncover how feature reuse and adaptation arise across both artificial and biological systems, e.g.\ via influence functions \citep{koh2017understanding}. Finally, the implicit inductive biases identified here could inform the design of explicit regularization objectives that drive networks into the desired learning regimes.

% Extending this framework to nonlinear networks represents an important next step. Finally, measuring the $\ell$-order and pretraining dependence in more complex architectures—e.g.\ using influence functions \citep{koh2017understanding}—could help uncover how feature reuse and adaptation arise during learning across both artificial and biological systems. 

%Finally, we here considered a simple, two-stage learning pipeline, but neural network learning may also occur in more than two stages \citep{bengio2009curriculum,gururangan2020don}, may want to avoid forgetting of the pretraining task \citep{goldfarb2024joint,kirkpatrick2017overcoming}, and may have to deal with distributional shifts in the test data \citep{jain2025train}; we anticipate that applying our theory to these settings may provide valuable practical insights as well.

%\begin{definition}
 % \label{def:inj}
  %A function $f:X \to Y$ is injective if for any $x,y\in X$ different, $f(x)\ne
  %  f(y)$.
%\end{definition}
%Using \cref{def:inj} we immediate get the following result:
%\begin{proposition}
  %If $f$ is injective mapping a set $X$ to another set $Y$,
 % the cardinality of $Y$ is at least as large as that of $X$
%\end{proposition}
%\begin{proof}
%  Left as an exercise to the reader.
%\end{proof}
%\cref{lem:usefullemma} stated next will prove to be useful.
%\begin{lemma}
  %\label{lem:usefullemma}
%\end{lemma}
%\begin{theorem}
%  \label{thm:bigtheorem}

%\end{theorem}

%\begin{corollary}

%\end{corollary}
%\begin{assumption}
  %\label{ass:xfinite}
%\end{assumption}
%\begin{remark}
%\end{remark}
%restatable

\section*{Acknowledgments}
NA thanks the Rafael del Pino Foundation for financial support. This research was funded in whole or in part by the Austrian Science Fund (FWF) 10.55776/COE12. SL was supported by a grant from the Simons Foundation International 542939SPI, LFA; SFI-AN-NC-GB-Culmination-00003215-01. This work was supported by a Schmidt Science Polymath Award to AS, and the Sainsbury Wellcome Centre Core Grant from Wellcome (219627/Z/19/Z) and the Gatsby Charitable Foundation (GAT3850). FM is funded by a MRC Career Development Award (MR/T010614/1), a UKRI Advanced Pain Discovery Platform grant (MR/W027593/1), and a EPSRC/MRC Programme Grant (UKRI1970).

\section*{Impact Statement}

%Authors are \textbf{required} to include a statement of the potential broader
%impact of their work, including its ethical aspects and future societal
%consequences. This statement should be %in an unnumbered section at the end of
%the paper (co-located with Acknowledgements -- the two may appear in either
%order, but both must be before References), and does not count toward the paper
%page limit. In many cases, where the ethical impacts and expected societal
%implications are those that are well established when advancing the field of
%Machine Learning, substantial discussion is not required, and a simple
%statement such as the following will suffice:

This paper presents work whose goal is to advance the field of Machine
Learning. There are many potential societal consequences of our work, none of
which we feel must be specifically highlighted here.

%The above statement can be used verbatim in such cases, but we encourage
%authors to think about whether there is content which does warrant further
%discussion, as this statement will be apparent if the paper is later flagged
%for ethics review.

% In the unusual situation where you want a paper to appear in the
% references without citing it in the main text, use \nocite
%\nocite{langley00}

\bibliography{example_paper}
\bibliographystyle{icml2026}

%%%%%%%%%%%%%%%%%%%%%%%%%%%%%%%%%%%%%%%%%%%%%%%%%%%%%%%%%%%%%%%%%%%%%%%%%%%%%%%
%%%%%%%%%%%%%%%%%%%%%%%%%%%%%%%%%%%%%%%%%%%%%%%%%%%%%%%%%%%%%%%%%%%%%%%%%%%%%%%
% APPENDIX
%%%%%%%%%%%%%%%%%%%%%%%%%%%%%%%%%%%%%%%%%%%%%%%%%%%%%%%%%%%%%%%%%%%%%%%%%%%%%%%
%%%%%%%%%%%%%%%%%%%%%%%%%%%%%%%%%%%%%%%%%%%%%%%%%%%%%%%%%%%%%%%%%%%%%%%%%%%%%%%
\newpage
\appendix
\onecolumn
\section{Extended Results}
We showcase the four limit regimes by plotting the generalization curves across three different ground truth structures, varying sparsity and overlap:

\begin{figure}[H]
\noindent
\begin{minipage}{0.5\textwidth}
    \centering
    \includegraphics[width=\linewidth]{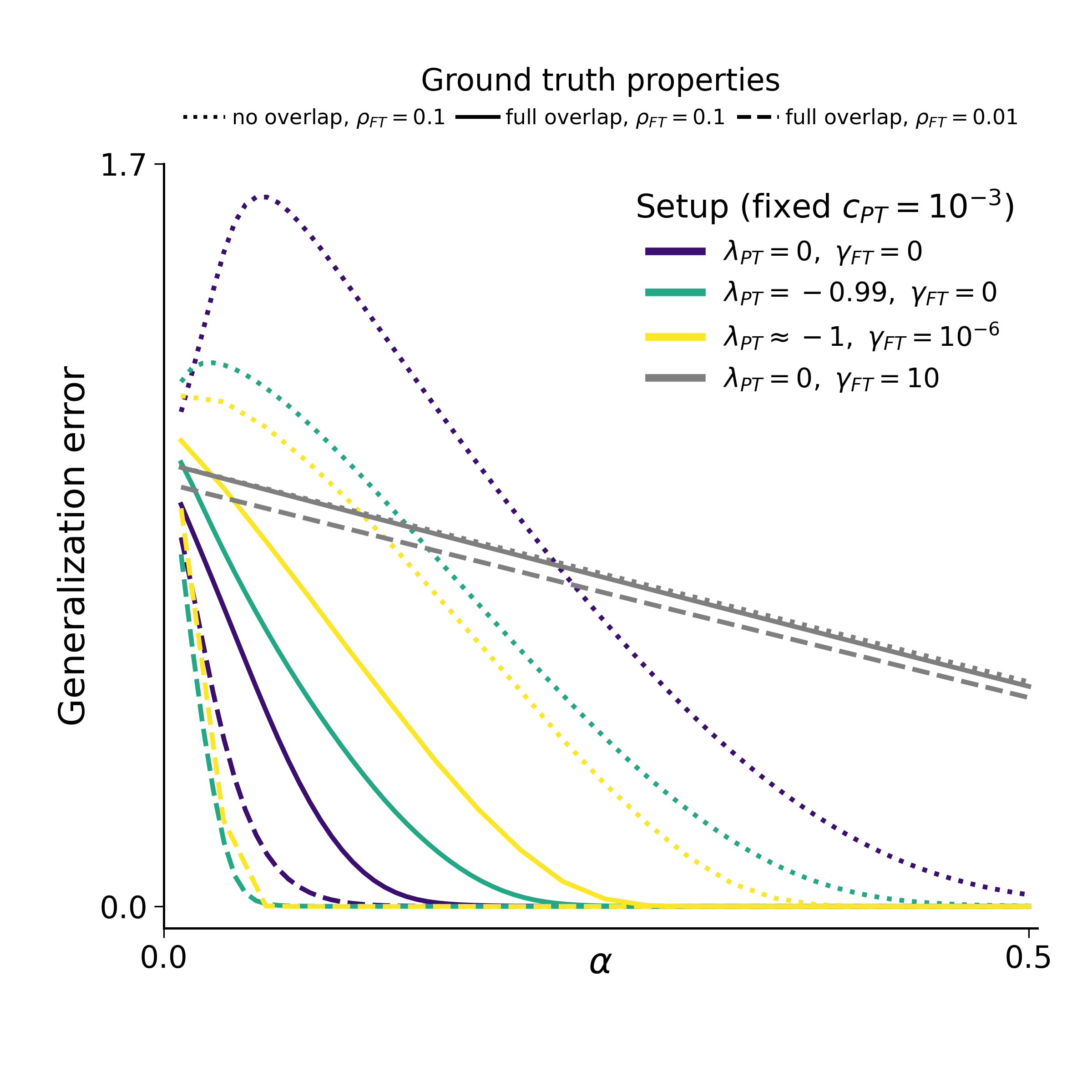}
\end{minipage}\hfill
\begin{minipage}{0.45\textwidth}
    \caption{\textbf{Comparison of the four fine-tuning learning regimes.} We show three task parameter settings: 1) no overlap between pretraining and fine-tuning dimensions; 2) identical pretraining and fine-tuning dimensions; 3) fine-tuning dimension as a subset of pretraining dimensions. We should for initialization parameter settings: a lazy, pretraining-dependent regime (II, shown in purple), a lazy, pretraining-independent regime (III, shown in grey), an intermediate rich, pretraining-dependent regime (IV, shown in green), and a richer, less pretraining-dependent regime (I, shown in yellow). We observe that for the task parameter setting without any overlap, the regime approaching the rich, pretraining-independent regime (in yellow) is optimal. For the complete overlap between pretraining and fine-tuning dimensions, the lazy, pretraining-dependent regime (II, in purple) is optimal. Finally, for the task where the fine-tuning dimensions are a subset of pretraining dimensions, the rich, pretraining-dependent regime (IV, in green) is optimal. All of these observations are predicted by our theoretical insight in the inductive bias of PT+FT.}
    \label{fig:replica_2a}
\end{minipage}
\end{figure}

%\begin{figure}[H]
%\centering
%\includegraphics[width=0.8\linewidth]{figures/figure3_detailed_2x3.png}
%\caption{\textbf{Theoretical Replica Curves match experiments and confirm regime predictions (more detail):}
%expected test MSE curves as a function of the data scale $\alpha$ for the PT+FT diagonal network setup calculated using the replica method, together with PT+FT averaged MSE results in large dimensional diagonal networks. (a–c) varying feature overlap $\omega$ with fixed fine-tuning sparsity $\rho_{FT}=0.1$. (d–f) varying fine-tuning sparsity under fixed overlap $\omega=1.0$.}
%\label{fig:replica_2b}
%\end{figure}
\section{Theoretical Analysis of the PT+FT Learning Regimes}
\subsection{Implicit Bias}
\label{app:implicit_bias}
\ImplicitBias*
\begin{proof} 
We follow a similar proof method as in \cite{azulay2021implicit}.
We assume the pretraining quantities:
\begin{align*}
    {\mathbf{w}^+(0)}^2 - {\mathbf{v}^+(0)}^2 &= \tilde{\lambda}_{PT} \\
    {\mathbf{w}^-(0)}^2 - {\mathbf{v}^-(0)}^2 &= \tilde{\lambda}_{PT} \\
    \mathbf{w}^+(0)\mathbf{w}^-(0) + \mathbf{v}^+(0)\mathbf{v}^-(0) &= c_{PT},
\end{align*}
where $\lambda_{PT}=\tilde{\lambda}_{PT}/c_{PT}$.

Reparameterize:
\begin{align*}
     \mathbf{w}^+ &= \sqrt{\tilde{\lambda}_{PT}} \cosh \theta^+ , \quad \mathbf{w}^- = \sqrt{\tilde{\lambda}_{PT}} \cosh \theta^- \\
     \mathbf{v}^+ &= \sqrt{\tilde{\lambda}_{PT}} \sinh \theta^+ , \quad \mathbf{v}^- = \sqrt{\tilde{\lambda}_{PT}} \sinh \theta^-
\end{align*}

We use the conserved quantities $c_{PT}$, $\tilde{\lambda}_{PT}$ to get an expression for $\theta^+$ + and  $\theta^-$.

\begin{align*}
    \cosh \theta^+ \cosh \theta^- + \sinh \theta^+ \sinh \theta^- &= \frac{c_{PT}}{\tilde{\lambda}_{PT}} \\
    \cosh \theta^+ \sinh \theta^+ - \cosh \theta^- \sinh \theta^- &= \frac{\beta_{PT}}{\tilde{\lambda}_{PT}}
\end{align*}

Therefore

\begin{align*}
 \mathbf{w}^+ &= \sqrt{\tilde{\lambda}_{PT}} \cosh \left( \frac{1}{2} \left[ \operatorname{arcosh}\left(\frac{c_{PT}}{\tilde{\lambda}_{PT}}\right) + \operatorname{arsinh}\left(\frac{\beta_{PT}}{c_{PT}}\right) \right] \right) \\
\mathbf{v}^+ &= \sqrt{\tilde{\lambda}_{PT}} \sinh \left( \frac{1}{2} \left[ \operatorname{arcosh}\left(\frac{c_{PT}}{\tilde{\lambda}_{PT}}\right) + \operatorname{arsinh}\left(\frac{\beta_{PT}}{c_{PT}}\right) \right] \right) \\
 \mathbf{w}^- &= \sqrt{\tilde{\lambda}_{PT}} \cosh \left( \frac{1}{2} \left[ \operatorname{arcosh}\left(\frac{c_{PT}}{\tilde{\lambda}_{PT}}\right) - \operatorname{arsinh}\left(\frac{\beta_{PT}}{c_{PT}}\right) \right] \right) \\
 \mathbf{v}^- &= \sqrt{\tilde{\lambda}_{PT}} \sinh \left( \frac{1}{2} \left[ \operatorname{arcosh}\left(\frac{c_{PT}}{\tilde{\lambda}_{PT}}\right) - \operatorname{arsinh}\left(\frac{\beta_{PT}}{c_{PT}}\right) \right] \right)
\end{align*}

After pretraining, we reinitialize parameters:
\begin{align*}
    \mathbf{w}^+_{FT}(0) &=  \mathbf{w}^-_{FT}(0) =  \mathbf{w}^+_{PT}(\infty) +  \mathbf{w}^-_{PT}(\infty) \\
     \mathbf{v}^+_{FT}(0) &=  \mathbf{v}^-_{FT}(0) = \gamma_{FT} 
\end{align*}

We are interested in the initial conserved quantity before finetuning, $c_{FT}$.

By definition:
\begin{align*}
    c_{FT} &= \mathbf{w}_{FT}^+(0)\mathbf{w}_{FT}^-(0) + \mathbf{v}_{FT}^+(0)\mathbf{v}_{FT}^-(0) \\
    &= \left( \mathbf{w}^+_{PT}(\infty) + \mathbf{w}^-_{PT}(\infty) \right)^2 + \gamma_{FT}^2 \\
    &= \left(\mathbf{w}^+_{PT}(\infty)\right)^2 
       + 2\,\mathbf{w}^+_{PT}(\infty)\,\mathbf{w}^-_{PT}(\infty) 
       + \left(\mathbf{w}^-_{PT}(\infty)\right)^2 + \gamma_{FT}^2
\end{align*}
Let
\[
A = \operatorname{arcosh}\left(\frac{c_{PT}}{\tilde{\lambda}_{PT}}\right), \qquad
B = \operatorname{arsinh}\left(\frac{\beta_{PT}}{c_{PT}}\right)
\]

We compute:
\begin{align*}
\mathbf{w}^+_{PT}(\infty)^2 &= \tilde{\lambda}_{PT} \cosh^2\left( \frac{1}{2}(A + B) \right) = \tilde{\lambda}_{PT} \left( \frac{1 + \cosh(A + B)}{2} \right) \\
\mathbf{w}^-_{PT}(\infty)^2 &= \tilde{\lambda}_{PT} \cosh^2\left( \frac{1}{2}(A - B) \right) = \tilde{\lambda}_{PT} \left( \frac{1 + \cosh(A - B)}{2} \right)
\end{align*}

So:
\[
\mathbf{w}^+_{PT}(\infty)^2 + \mathbf{w}^-_{PT}(\infty)^2 = \tilde{\lambda}_{PT} \left( 1 + \frac{\cosh(A + B) + \cosh(A - B)}{2} \right)
\]

Recall the identity:
\[
\cosh(A + B) + \cosh(A - B) = 2\cosh A \cosh B
\]

Therefore:
\[
\mathbf{w}^+_{PT}(\infty)^2 + \mathbf{w}^-_{PT}(\infty)^2 = \tilde{\lambda}_{PT} \left( 1 + \cosh A \cosh B \right)
\]

Now:
\begin{align*}
2 \mathbf{w}^+_{PT}(\infty) \mathbf{w}^-_{PT}(\infty) 
&= 2 \sqrt{\tilde{\lambda}_{PT}} \cosh\left( \frac{A + B}{2} \right) \cdot \sqrt{\tilde{\lambda}_{PT}} \cosh\left( \frac{A - B}{2} \right) \\
&= 2 \tilde{\lambda}_{PT} \cosh\left( \frac{A + B}{2} \right) \cosh\left( \frac{A - B}{2} \right) \\
&= \tilde{\lambda}_{PT} \left( \cosh A + \cosh B \right),
\end{align*}
using the identity \( 2\cosh x \cosh y = \cosh(x + y) + \cosh(x - y) \).

Now compute:
\[
\cosh A = \frac{c_{PT}}{\tilde{\lambda}_{PT}}, \qquad
\cosh B = \cosh\left( \operatorname{asinh}\left( \frac{\beta^{aux}}{c_{PT}} \right) \right) 
= \sqrt{1 + \left( \frac{\beta^{aux}}{c_{PT}} \right)^2 }
\]

\begin{align*}
c_{\text{FT}} &= \tilde{\lambda}_{\text{PT}} \frac{c_{\text{PT}}}{\tilde{\lambda}_{\text{PT}}} 
\sqrt{1 + \left( \frac{\beta_{PT}}{c_{\text{PT}}} \right)^2} \\
&\quad + \tilde{\lambda}_{\text{PT}} \left( \frac{c_{\text{PT}}}{\tilde{\lambda}_{\text{PT}}} 
+ \sqrt{1 + \left( \frac{\beta_{PT}}{c_{\text{PT}}} \right)^2} \right) 
+ \gamma_{FT}^2 \\
c_{\text{FT}} &= \left( \tilde{\lambda}_{\text{PT}} + c_{\text{PT}} \right) \left( 1 + \sqrt{1 + \left( \frac{\beta_{PT}}{c_{\text{PT}}} \right)^2} \right) + \gamma_{FT}^2 \\
\end{align*}

Now, from \citet{azulay2021implicit} we know that the implicit bias of a diagonal linear network is parameterized as in Eq. $\eqref{eq:diagonal_net}$ that converges to a zero loss solution can be expressed as:

\begin{equation*}
    \beta_{*}(\infty) = \arg \min_{\beta_{FT}} Q_k(\beta_{FT}) 
    \quad \text{s.t.} \quad \mathbf{X}_{FT}^{\top}\beta_{FT} = \mathbf{y}_{FT},
\end{equation*}

\begin{equation*}
    \text{where} \quad Q_k(\beta_{FT}) = \sum_{i=1}^{D} q_{k_i}(\beta_{FT,i}), 
\end{equation*}

\begin{equation*}
    \text{with} \quad q_k(z) = \frac{\sqrt{k}}{4}\left(1-\sqrt{1+\frac{4z^2}{k}}+\frac{2z}{\sqrt{k}}\operatorname{arcsinh}\left(\frac{2z}{\sqrt{k}}\right)\right),
\end{equation*}

where $k_i=4c_i^2$. In our case $k_i = 4c_{FT,i}^2 =  \Big(
2 (\tilde{\lambda}_{\text{PT, i}} + c_{\text{PT, i}})
   \Bigl(1+\sqrt{1+(\beta_{\text{PT, i}}/c_{\text{PT, i}})^{2}}\Bigr)\;+\;\gamma_{FT}^{2} \Big)^2. $ Hence $k_i = 4c_{FT,i}^2$.  The theorem follows from replacing $\tilde{\lambda}_{\text{PT},i}=\lambda_{PT,i}c_{PT,i}$. 
\end{proof}

%\subsection{Definition of scale and relative scale}

\subsection{Pretraining Importance and $\ell$-order}
\label{app:l-FD}
We now derive an analytical expression for $\ell$-order and pretraining dependence (PD), where
\begin{align}
\ell\text{-order} &:= \frac{\partial \log q_k(\beta_{FT})}{\partial \log \beta_{FT}} \quad \text{ and } \quad 
\text{PD} := \frac{\partial \log q_k(\beta_{FT})}{\partial \log \beta_{PT}}.
\end{align}
We define
\begin{equation}
    \zeta:=\frac{2\beta_{FT}}{\sqrt{k}},\quad s:=\frac{\beta_{PT}}{c_{PT}},\quad\phi(z)=1-\sqrt{1+z^2}+z\operatorname{arcsinh}\left(z\right).
\end{equation}
We additionally define the quantity
\begin{equation}
    \kappa:=\frac{\partial\log k}{\partial\log\beta_{PT}}.
\end{equation}
Intuitively, $\kappa$ captures how much a certain function that was learned during pretraining transfers to fine-tuning features.
\begin{proposition}
    We find that
    \begin{align}
        \ell\text{-order}&=\frac{\zeta\operatorname{asinh}(\zeta)}{\phi(\zeta)},\\
        \kappa&=\frac{4c_{PT}(1+\lambda_{PT})s^2}{(2c_{PT}(1+\lambda_{PT})(1+\sqrt{1+s^2)+\gamma_{FT}^2)\sqrt{1+s^2}}},\\
        PD=\frac{1-\ell\text{-order}}{2}\kappa.
    \end{align}
    Notably,
    \begin{equation}
        \ell\text{-order}\in[1,2],\quad\kappa\in[0,2),\quad PD\in[-1,0],
    \end{equation}
    and
    \begin{equation}
        \ell\text{-order}+PD\in[1,2]
    \end{equation}
\end{proposition}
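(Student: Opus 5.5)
The plan is to reduce everything to one-variable calculus in the variable $\zeta$, using the closed forms from Theorem~\ref{thm:implicit_bias}. First I would rewrite the penalty as $q_k(z)=\tfrac{\sqrt{k}}{4}\,\phi(\zeta)$ with $\zeta=2z/\sqrt{k}$, which is immediate from the definitions of $q_k$ and $\phi$. Then
\[
\log q_k=\tfrac12\log k+\log\phi\!\left(2\beta_{FT}/\sqrt{k}\right)+\text{const}.
\]
A direct computation gives $\phi'(z)=\operatorname{arcsinh}(z)$: the term $-z/\sqrt{1+z^2}$ from differentiating $-\sqrt{1+z^2}$ cancels against $z\cdot\tfrac{d}{dz}\operatorname{arcsinh} z$. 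Since $\zeta$ scales linearly with $\beta_{FT}$, differentiating in $\log|\beta_{FT}|$ at fixed $k$ gives
\[
\ell\text{-order}=\frac{\zeta\phi'(\zeta)}{\phi(\zeta)}=\frac{\zeta\operatorname{asinh}(\zeta)}{\phi(\zeta)}.
\]
Since $\beta_{FT}$ enters only through $\zeta$, and $\zeta$ scales as $k^{-1/2}$, differentiating in $\log k$ gives
\[
\frac{\partial\log q}{\partial\log k}=\tfrac12-\tfrac12\,\ell\text{-order}.
\]
By the chain rule, $\text{PD}=\tfrac{1-\ell\text{-order}}{2}\,\kappa$. Because $\phi$ is even, I would work with $\zeta\ge 0$ throughout and interpret all derivatives in terms of absolute values.

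For $\kappa$, I would write $k=M^2$ with
\[
M=2c_{PT}(1+\lambda_{PT})\bigl(1+\sqrt{1+s^2}\bigr)+\gamma_{FT}^2 .
\]
Then $\kappa=2\,\partial\log M/\partial\log\beta_{PT}$. Using $\partial\sqrt{1+s^2}/\partial\log s=s^2/\sqrt{1+s^2}$, this gives exactly the stated formula. For the range of $\kappa$: since $\lambda_{PT}\ge-1$ and $\gamma_{FT}^2\ge0$, we get $\kappa\ge0$. Dropping $\gamma_{FT}^2$ from the denominator bounds
\[
\kappa\le\frac{2s^2}{\bigl(1+\sqrt{1+s^2}\bigr)\sqrt{1+s^2}}=\frac{2\bigl(\sqrt{1+s^2}-1\bigr)}{\sqrt{1+s^2}}<2,
\]
where the equality uses $s^2=(\sqrt{1+s^2}-1)(\sqrt{1+s^2}+1)$.

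The main step is showing $\ell\text{-order}\in[1,2]$, i.e.\ $\phi(\zeta)\le\zeta\operatorname{asinh}\zeta\le2\phi(\zeta)$ for $\zeta>0$ (using $\phi>0$ there). The lower inequality is equivalent to $1-\sqrt{1+\zeta^2}\le0$, which is trivial. For the upper inequality, I would set
\[
g(\zeta)=2\phi(\zeta)-\zeta\operatorname{asinh}\zeta=2-2\sqrt{1+\zeta^2}+\zeta\operatorname{asinh}\zeta .
\]
Then $g(0)=0$ and $g'(\zeta)=\operatorname{asinh}\zeta-\zeta/\sqrt{1+\zeta^2}$. This derivative vanishes at $0$ and has derivative $(1+\zeta^2)^{-1/2}-(1+\zeta^2)^{-3/2}\ge0$, so $g'\ge0$ and hence $g\ge0$. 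The boundary value $\ell\text{-order}\to2$ as $\zeta\to0$ follows from Taylor expansion, since $\phi\sim\zeta^2/2$ and $\zeta\operatorname{asinh}\zeta\sim\zeta^2$.

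Finally, since $1-\ell\text{-order}\in[-1,0]$ and $\kappa/2\in[0,1)$, we get $\text{PD}\in[-1,0]$. Writing
\[
\ell\text{-order}+\text{PD}=\bigl(1-\tfrac{\kappa}{2}\bigr)\,\ell\text{-order}+\tfrac{\kappa}{2}\cdot1
\]
exhibits it as a convex combination of $\ell\text{-order}\in[1,2]$ and $1$, so it lies in $[1,2]$.
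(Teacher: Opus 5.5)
Your proposal is correct and follows the paper's proof essentially step for step: the same rewriting $q_k=\tfrac{\sqrt{k}}{4}\phi(\zeta)$, the same chain-rule derivation of $\text{PD}=\tfrac{1-\ell\text{-order}}{2}\kappa$, the same bound on $\kappa$ obtained by dropping $\gamma_{FT}^2$, and the same convex-combination argument for $\ell\text{-order}+\text{PD}$. The only cosmetic difference is in the upper bound on the $\ell$-order. The paper substitutes $\zeta=\sinh t$ and proves $t\sinh t\ge 2(\cosh t-1)$ by checking that the second derivative $t\sinh t$ is nonnegative, which is exactly your second-derivative argument for $g$ written in a different variable.
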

\begin{proof}
First, with $\zeta:=2\beta_{FT}/\sqrt{k}$ we can rewrite
\[
q_k(\beta_{FT})=\frac{\sqrt{k}}{4}\Bigl(1-\sqrt{1+\zeta^2}+\zeta\,\operatorname{arcsinh}(\zeta)\Bigr)
=\frac{\sqrt{k}}{4}\,\phi(\zeta).
\]

We now first derive the closed forms before deriving bounds for the different quantities.

\paragraph{Closed form for $\ell$-order.}
For fixed $k$,
\[
\ell\text{-order}
=\frac{\partial\log q_k(\beta_{FT})}{\partial\log\beta_{FT}}
=\frac{\partial\log \phi(\zeta)}{\partial\log\zeta}
=\zeta\,\frac{\phi'(\zeta)}{\phi(\zeta)}.
\]
Since $\phi'(\zeta)=\operatorname{asinh}(\zeta)$,
\[
\ell\text{-order}=\frac{\zeta\,\operatorname{asinh}(\zeta)}{\phi(\zeta)}.
\]

\paragraph{Closed form for $\kappa$.}
Write $\sqrt{k}=A$ where
\[
A:=2(\hat{\lambda}_{PT}+c_{PT})(1+\sqrt{1+s^2})+\gamma_{FT}^2,\qquad s:=\beta_{PT}/c_{PT}.
\]
Then $k=A^2$, so
\[
\kappa=\frac{\partial\log k}{\partial\log\beta_{PT}}=2\,\frac{\partial\log A}{\partial\log\beta_{PT}}.
\]
Since $s=\beta_{PT}/c_{PT}$ we have $\partial\log s/\partial\log\beta_{PT}=1$. Also,
\[
\frac{d}{ds}\sqrt{1+s^2}=\frac{s}{\sqrt{1+s^2}},
\quad\Rightarrow\quad
\frac{\partial A}{\partial s}=2(\hat{\lambda}_{PT}+c_{PT})\frac{s}{\sqrt{1+s^2}}.
\]
Thus
\[
\frac{\partial\log A}{\partial\log\beta_{PT}}
=\frac{s}{A}\frac{\partial A}{\partial s}
=\frac{s}{A}\cdot 2(\hat{\lambda}_{PT}+c_{PT})\frac{s}{\sqrt{1+s^2}}
=\frac{2(\hat{\lambda}_{PT}+c_{PT})s^2}{A\sqrt{1+s^2}},
\]
and therefore
\[
\kappa=\frac{4(\hat{\lambda}_{PT}+c_{PT})s^2}{A\sqrt{1+s^2}}
=\frac{4(\hat{\lambda}_{PT}+c_{PT})\,s^2}{\bigl(2(\hat{\lambda}_{PT}+c_{PT})(1+\sqrt{1+s^2})+\gamma_{FT}^2\bigr)\sqrt{1+s^2}}.
\]

\paragraph{Closed form for $PD$.}
By definition,
\[
PD=\frac{\partial\log q_k(\beta_{FT})}{\partial\log\beta_{PT}}
=\frac{\partial\log q_k(\beta_{FT})}{\partial\log k}\cdot
\frac{\partial\log k}{\partial\log\beta_{PT}}
=\frac{\partial\log q_k(\beta_{FT})}{\partial\log k}\cdot \kappa.
\]
Now
\[
\log q_k(\beta_{FT})
=\frac12\log k+\log\phi(\zeta)-\log 4,
\]
and $\frac{\partial\log\zeta}{\partial\log k}=-1/2$. Hence
\[
\frac{\partial\log q_k(\beta_{FT})}{\partial\log k}
=\frac12+\frac{\partial\log\phi(\zeta)}{\partial\log\zeta}\cdot\frac{\partial\log\zeta}{\partial\log k}
=\frac12-\frac12\,\ell\text{-order},
\]
which yields
\[
PD=\Bigl(\frac12-\frac12\,\ell\text{-order}\Bigr)\kappa
=\frac{1-\ell\text{-order}}{2}\kappa.
\]

\paragraph{Bounding $\ell$-order.}
Assume $\zeta\ge 0$. First, since $1-\sqrt{1+\zeta^2}\le 0$,
\[
\phi(\zeta)\le \zeta\,\operatorname{arcsinh}(\zeta)
\quad\Rightarrow\quad
\ell\text{-order}=\frac{\zeta\,\operatorname{arcsinh}(\zeta)}{\phi(\zeta)}\ge 1.
\]
For the upper bound, we show $\phi(\zeta)\ge \frac12\zeta\,\operatorname{arcsinh}(\zeta)$, i.e.
\[
\zeta\,\operatorname{arcsinh}(\zeta)\ge 2(\sqrt{1+\zeta^2}-1).
\]
Let $\zeta=\sinh t$ with $t=\operatorname{arcsinh}(\zeta)\ge 0$, so $\sqrt{1+\zeta^2}=\cosh t$. The inequality becomes
\[
t\sinh t\ge 2(\cosh t-1).
\]
Define $h(t):=t\sinh t-2(\cosh t-1)$. Then $h(0)=0$,
\[
h'(t)=t\cosh t-\sinh t,\qquad h''(t)=t\sinh t\ge 0 \quad(t\ge 0).
\]
Thus $h'$ is increasing and $h'(0)=0$, so $h'(t)\ge 0$ for $t\ge 0$, implying $h(t)\ge 0$. Hence $\ell\text{-order}\le 2$.

\paragraph{Bounding $\kappa$.}
Clearly $\kappa\ge 0$. Also $A\ge 2(\hat{\lambda}_{PT}+c_{PT})(1+\sqrt{1+s^2})$, so
\[
\kappa\le
\frac{4(\hat{\lambda}_{PT}+c_{PT})s^2}{2(\hat{\lambda}_{PT}+c_{PT})(1+\sqrt{1+s^2})\sqrt{1+s^2}}
=
2\cdot\frac{s^2}{\sqrt{1+s^2}\,(1+\sqrt{1+s^2})}.
\]
Let $B:=\sqrt{1+s^2}\ge 1$. Since $s^2=B^2-1$,
\[
2\cdot\frac{B^2-1}{B(1+B)}=2\cdot\frac{B-1}{B}<2,
\]
with equality approached only in the limit $B\to\infty$ and $\gamma_{FT}\to 0$. Hence $\kappa\in[0,2)$.

\paragraph{Bounding $PD$ and $\ell$-order$+PD$.}
From $PD=\frac{1-\ell\text{-order}}{2}\kappa$, the bounds $\ell\text{-order}\in[1,2]$ and $\kappa\in[0,2)$ imply
\[
PD\le 0,\qquad PD\ge \frac{1-2}{2}\cdot 2=-1,
\]
so $PD\in[-1,0]$.
Finally,
\[
\ell\text{-order}+PD
=\ell\text{-order}+\frac{1-\ell\text{-order}}{2}\kappa
=\Bigl(1-\frac{\kappa}{2}\Bigr)\ell\text{-order}+\frac{\kappa}{2}.
\]
Since $\kappa/2\in[0,1)$, this is a convex combination of $\ell\text{-order}\in[1,2]$ and $1$, hence it lies in $[1,2]$.
\end{proof}
%where we have used 

\section{Replica Theory}
\label{app:replica}
\subsection{Proof of Proposition~\ref{prop:replica}}
We considering following generative model for $(\beta_{PT,d}^*,\beta_{FT,d})$.
\begin{definition}
\label{def:group}
    We assume that there are $J$ underlying groups. Each group is sampled with probability $\pi_j$, $\sum_{j=1}^J\pi_j=1$, denoted by $j\sim\text{Cat}(\pi)$. Each group has associated pretraining and fine-tuning distributions $p^{(j)}_{PT}$ and $p^{(j)}_{FT}$ which are independent when conditioned on $j$:
    \begin{equation}
     j\sim\text{Cat}(\pi),\, \beta_{PT,d}^*\sim p^{(j)}_{PT},\,\beta_{FT,d}^*\sim p^{(j)}_{FT}.
  \end{equation}
\end{definition}
This means that any dependence between $\beta_{PT}$ and $\beta_{FT}$ is mediated by their respective group membership $j$.
We apply Proposition~1 of \citet{bereyhi2018maximum} to the estimator \eqref{eq:opt}.

\paragraph{Step 1: Notation.}
We begin by noting a few notational differences to their setting. Specifically, to translate our setting into theirs, we set
\begin{equation}
    A:=X,\quad x:=\beta,\quad c_d:=k_d,\quad u_j(v;c):=q_c(v).
\end{equation}
Moreover,
\begin{equation}
    g_j^{\text{dec}}\equiv\hat{\beta}^{\text{sc}},
\end{equation}
except that we write out the dependence on $\theta$ explicitly. Finally, they treat $(c_d)$ as a deterministic sequence, whereas we consider a particular probability distribution for each block, $p^{(j)}_c$. However, because they average over coordinates, we know that this converges to the corresponding block-mixture expectations, weighted by $\pi_j=\lim_{D\to\infty}|B_j|/D$

\paragraph{Step 2: Simplifying for i.i.d. X.}
Proposition~1 of \citet{bereyhi2018maximum} states (under the replica assumption) that, in the high-dimensional limit, the joint law of a typical coordinate decouples into a scalar Gaussian channel: for $d\in B_j$,
\[
(\beta_d^*,\hat\beta_d,k_d)\ \Rightarrow\
(\beta^*,\hat{\beta}^{\mathrm{sc}}(\beta^*+\eta;k,\theta),k),
\qquad
\eta\sim\mathcal{N}(0,\theta_0),
\]
where $(\beta^*,k)\sim p_\beta^{(j)}\otimes p_k^{(j)}$ and the block is drawn according to $\pi_j$.
Moreover, the scalar estimator is the MAP operator
\[
\hat{\beta}^{\mathrm{sc}}(y;k,\theta)
:=
\arg\min_{\beta\in\mathbb{R}}
\left\{
\frac{(y-\beta)^2}{2\theta}
+
Q_k(\beta)
\right\},
\]
which matches our definition.

\paragraph{Step 3: Simplifying the fixed-point equations.}
We now consider their resulting fixed-point equations and show how they evaluate to the fixed-point equations presented in our proposition. First, we note that because $X$ is i.i.d.\ with variance $1/D$, we can evaluate the R-transform. Specifically, $X$ has variance $1/D$. Thus $\tfrac{1}{\alpha}X$ has variance $1/N$. Thus, $R_{\tfrac{1}{\alpha}X}(\omega)=\frac{\alpha}{\alpha-\omega}$. By the properties of the R-transform (see e.g.\ \citet{muller2013applications}),
\begin{equation}
    R_X(\omega)=\alpha R_{\tfrac{1}{\alpha} X}(\alpha\omega)=\frac{\alpha}{1-\omega}.
\end{equation}
Thus,
\begin{equation}
    \theta=\frac{\lambda+\chi}{\alpha},\quad
    \theta_0=\frac{\sigma_0^2+p}{\alpha}.
\end{equation}

% The fixed-point equation for $p$ immediately results from this (keeping in mind that we replace the average over the sequence by an average over the expectation over the different blocks).
For the fixed-point equation for $\chi$, we additionally simplify the expression a little bit:
\begin{align}
\begin{split}
    \frac{\theta_0}{\theta}\chi=\sum_{j=1}^J\pi_j\mathbb{E}_{\beta^*,k,\eta}\left[(\hat{\beta}^{\text{sc}}(\beta^*+\eta;k,\theta)-\beta^*)\eta\right]&\overset{(1)}{=}\sum_{j=1}^J\pi_j\mathbb{E}_{\beta^*,k,\eta}\left[\hat{\beta}^{\text{sc}}(\beta^*+\eta;k,\theta)\eta\right]\\&\overset{(2)}{=}\theta_0\sum_{j=1}^J\pi_j\mathbb{E}_{\beta^*,k,\eta}\left[\partial_y\hat{\beta}^{\text{sc}}(\beta^*+\eta;k,\theta)\right],
\end{split}
\end{align}
where (1) arises from the fact that $\beta^*$ and $z$ are independent and $\mathbb{E}[z]=0$ and (2) is a direct application of Stein's lemma,
\begin{equation}
    \mathbb{E}[Xf(X)]=\sigma^2\mathbb{E}[f'(X)]\text{ for }X\sim\mathcal{N}(0,\sigma^2).
\end{equation}

Hence, we can reduce the original matrix-valued fixed point equations to a system of scalar fixed-point equations in the parameters $(p,\chi)$.
% Specifically, letting $\eta \sim \mathcal{N}(0,\theta_0)$, the fixed point is characterized by

% This proves the proposition.

\paragraph{Expressing $\partial_y\hat{\beta}(y;k,\theta)$.}
Denoting
\begin{equation}
    q_k'(x):=\frac{\partial q_k(x)}{\partial x},\quad
    q_k''(x):=\frac{\partial^2 q_k(x)}{\partial^2 x^2},
\end{equation}
the first-order optimality condition for
\[
\hat{\beta}^{\mathrm{sc}}(y;k,\theta)=\arg\min_{\beta}\ \frac{(y-\beta)^2}{2\theta}+q_k(\beta)
\]
is
\[
0=\frac{1}{\theta}\big(\hat{\beta}^{\mathrm{sc}}(y;k,\theta)-y\big)+q_k'\!\big(\hat{\beta}^{\mathrm{sc}}(y;k,\theta)\big).
\]
Implicit differentiation with respect to $y$ gives
\[
\partial_y\hat{\beta}^{\mathrm{sc}}(y;k,\theta)
=
\frac{1}{1+\theta\,q_k''\!\big(\hat{\beta}^{\mathrm{sc}}(y;k,\theta)\big)}.
\]
We note that
\begin{equation}
    q_k'(z)=\frac{1}{2}\operatorname{asinh}\left(\frac{2z}{\sqrt{k}}\right),\quad
    q_k''(z)=\frac{1}{\sqrt{k+4z^2}}.
\end{equation}
Substituting $\beta=\hat{\beta}^{\mathrm{sc}}(y;k,\theta)$ yields the explicit Jacobian
\[
\partial_y\hat{\beta}^{\mathrm{sc}}(y;k,\theta)
=
\frac{1}{
1+\dfrac{\theta}{\sqrt{k+4\hat{\beta}^{\mathrm{sc}}(y;k,\theta)^2}}
}.
\]

% Hence, we can reduce the original matrix-valued fixed point equations to a system of scalar fixed-point equations in the parameters $(p,\chi)$.
% Specifically, letting $\eta \sim \mathcal{N}(0,\theta_0)$, the fixed point is characterized by
\begin{align}
p
&=
\sum_{j=1}^J \pi_j
\mathbb{E}_{\beta^*,k, \eta}
\bigl[
\bigl(\hat{\beta}^{\mathrm{sc}}(\beta^*+\eta;k,\theta)-\beta^*\bigr)^2
\bigr],
\label{eq:fp-p}
\\
\chi
&=
\theta_0
\sum_{j=1}^J \pi_j
\mathbb{E}_{\beta^*,\eta}
\left[\left(1+\dfrac{\theta}{\sqrt{k+4\hat{\beta}^{\mathrm{sc}}(\beta^* + \eta;k,\theta)^2}}\right)^{-1}\right],
% \partial_y \hat{\beta}^{\mathrm{sc}}(\beta^*+\eta;k,\theta)
% \bigr],
\label{eq:fp-chi}
\end{align}
with the closure relations
\begin{equation}
\theta = \frac{\lambda+\chi}{\alpha},
\qquad
\theta_0 = \frac{\sigma_0^2+p}{\alpha}.
\label{eq:fp-closure}
\end{equation}

This proves the proposition.

\section{Experiments}
\label{app:experiemnt}

In this section, we provide a more detailed description of the experiments conducted on ResNet architectures in different settings. We pair the accuracy results presented  with a measure of the representation before and after fine-tuning. We employ the commonly used \emph{participation ratio} (PR; \citep{gao2017theory}) as a measure of dimensionality, and the \emph{effective number of shared dimensions} (ENSD; \citep{giaffar2024effective}) as a measure of the number of principal components aligned between two representations. Intuitively, the PR and ENSD of network representations before and after fine-tuning capture the key phenomenology of the pretraining-dependent rich regime. Specifically, we expect that the dimensionality of the network representation $X_{\mathrm{FT}}$ after fine-tuning is lower than that of the representation $X_{\mathrm{PT}}$ after pretraining, i.e., $\mathrm{PR}(X_\mathrm{FT}) < \mathrm{PR}(X_\mathrm{PT})$, and that nearly all representational dimensions expressed post-fine-tuning are inherited from the pretraining state, i.e., $\mathrm{ENSD}(X_\mathrm{PT}, X_\mathrm{FT}) \approx \mathrm{PR}(X_\mathrm{FT})$. The manifestation of this regime is more or less pronounced depending on the parameter we vary.

\subsection{Experiments $\lambda_{PT}$}
The ResNet-18 architecture consists of an initial embedding layer followed by four stages of residual blocks, each containing two convolutional layers with identity skip connections. To induce imbalance across the network, we upscaled the embedding layer and the first three residual stages. In this experiment we modify lambda balanced during pretraining and leave the network unchanged during fine-tuning.  In \textbf{Fig. \ref{app_ensd_lambda}} we observe that for decreasing $\kappa$ the representation in the last layer looks like a signature of the pretraining-dependent rich regime described above. Furthermore, we observe that the overall dimensionality of the network decreases as a function of $\kappa$.

\begin{figure}[H]
    \centering
    \includegraphics[width=\textwidth]{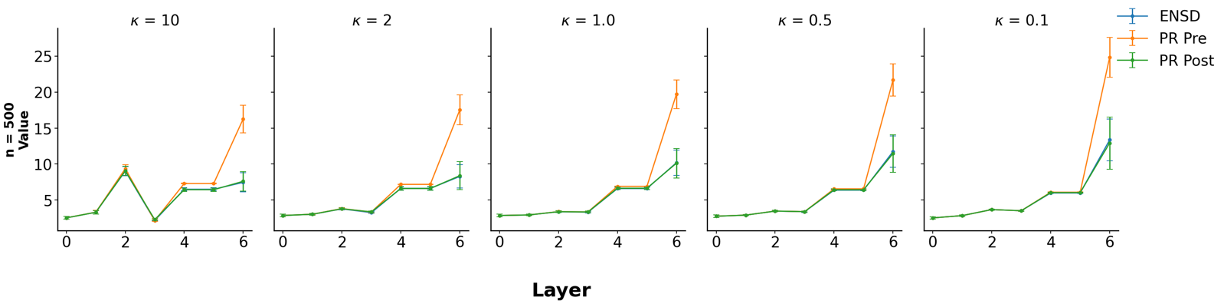}
    \caption{\textbf{ResNet experiments on CIFAR-100.} Resnet layers before and after fine-tuning (PR Pre and PR Post) as well as their ENSD as a function of the $\kappa_{FT}$ re-initialization.}
    \label{app_ensd_lambda}
    \vspace{-1em}
\end{figure}

\subsection{Experiments $c_{PT}$}
To implement an equivalent notion of $c_{PT}$ scaling in this architecture, we scale all network parameters by a constant factor~$\kappa$.  In this experiment, the overall scaling is applied during pretraining, while the network remains unchanged during fine-tuning. In \textbf{Fig. \ref{ensd_cpt}} we observe that for decreasing  $c_{PT}$  the representation in the last layer looks like a signature of the pretraining-dependent rich regime described above.

\begin{figure}[H]
    \centering
    \includegraphics[width=\textwidth]{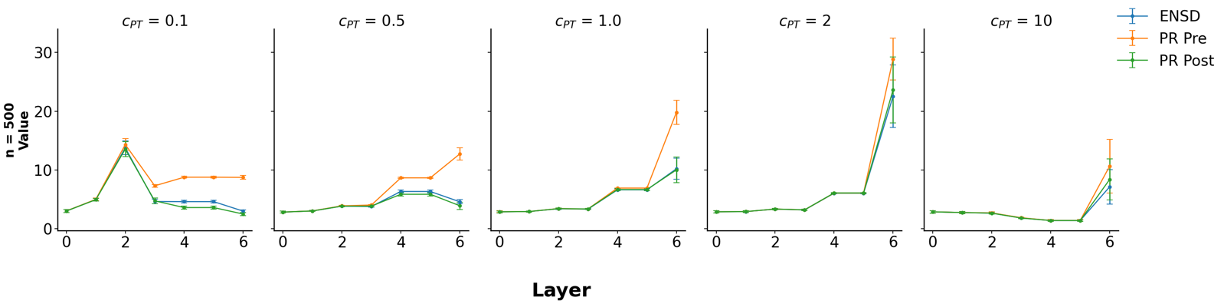}
    \caption{\textbf{ResNet experiments on CIFAR-100.} Resnet layers before and after fine-tuning (PR Pre and PR Post) as well as their ENSD as a function of the $c_{PT}$ re-initialization.}
    \label{ensd_cpt}
    \vspace{-1em}
\end{figure}

\subsection{Experiments $\gamma_{FT}$}

To implement an equivalent notion of $\gamma_{FT}$ rescaling in this architecture, we scale the last layer parameter. In this experiment, the overall scaling is applied during fine-tuning, while the network remains unchanged during pretraining. In \textbf{Fig.~\ref{app_ensd_gamma}} we observe that for decreasing $\gamma_{FT}$ the representation in the last layer looks like a signature of the pretraining-dependent rich regime discribed above.

\begin{figure}[H]
    \centering
    \includegraphics[width=\textwidth]{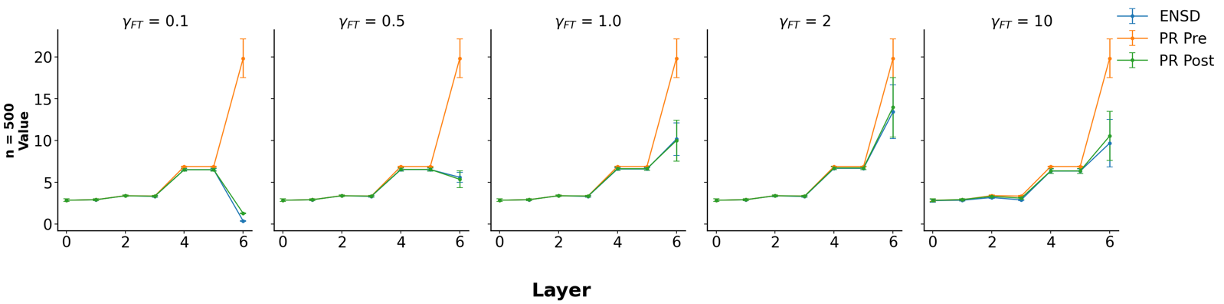}
    \caption{\textbf{ResNet experiments on CIFAR-100.} Resnet layers before and after fine-tuning (PR Pre and PR Post) as well as their ENSD as a function of the $\gamma_{FT}$ re-initialization.}
    \label{app_ensd_gamma}
    \vspace{-1em}
\end{figure}

\subsection{Experiments $c_{FT}$}

For completeness, we include the heuristic proposed by \cite{lipplinductive} for inducing a pretraining-dependent rich regime, which consists of rescaling all network weights by a constant $c_{FT} < 1$ during fine-tuning (see Appendix~\ref{app:experiemnt}). As shown in \textbf{Fig.~\ref{fig:app_ensd_cft}}, this heuristic is reported to improve performance relative to the baseline. The values found are not the same as the one reported in \cite{lipplinductive} since the set of parameter used are different. In \textbf{Fig.~\ref{fig:app_ensd_cft}} we observe that for decreasing $c_{FT}$ the representation in the last layer looks like a signature of the pretraining-dependent rich regime described above.

%we find that an imbalanced pretraining initialization leads to even better fine-tuning performance.

\begin{figure}[H]
    \centering
    \includegraphics[width=\textwidth]{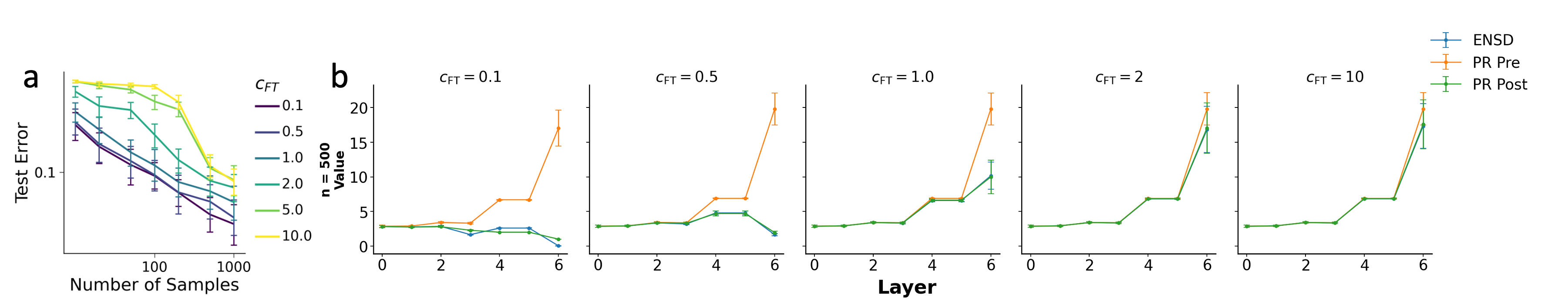}
    \caption{\textbf{ResNet experiments on CIFAR-100.} (a) Generalization performance as a function of the number of samples and initalization parameters. We vary $c_{FT}$. (b) Resnet layers before and after fine-tuning (PR Pre and PR Post) as well as their ENSD as a function of the $\gamma_{FT}$ re-initialization.}
    \label{fig:app_ensd_cft}
    \vspace{-1em}
\end{figure}

\subsection{Control of task similarity}
\label{app:similarity}
While it is not straightforward to exactly measure task overlap, we investigate its impact by fine-tuning on a different dataset (SVHN), which is less similar to the PT data and therefore serves as a proxy for lower overlap (Fig. \ref{fig:Further_experiemnts}). FT performance exceeds that of single-task learning at intermediate training stages across all kappa values. Consistent with our CIFAR results, initializing ResNet with a smaller-than-standard scale enhances FT generalization. Overall, these findings support the hypothesis that reducing dependence on PT amplifies the advantages of scaling for tasks with lower overlap (compared to the CIFAR experiment in Fig.\ref{fig:experiments} \textbf{a}).

\subsection{Method robustness to optimizer choice}
\label{app:optimizer}
We observe the same qualitative trend as diagonall networks in non-linear. In Fig.~\ref{fig:Further_experiemnts}\textbf{b}, using a ResNet trained with Adam with weight decay, smaller $\alpha$ -our practical proxy for negative relative scale-consistently improves FT performance, indicating that the effect is not tied to a specific optimizer or model. While different optimizers introduce stochasticity, finite step sizes, and adaptive scaling, the qualitative dependence on relative scale and task structure remains unchanged, indicating that the observed regimes are not optimizer-specific.

\begin{figure}
    \centering
    \includegraphics[width=0.95\textwidth]{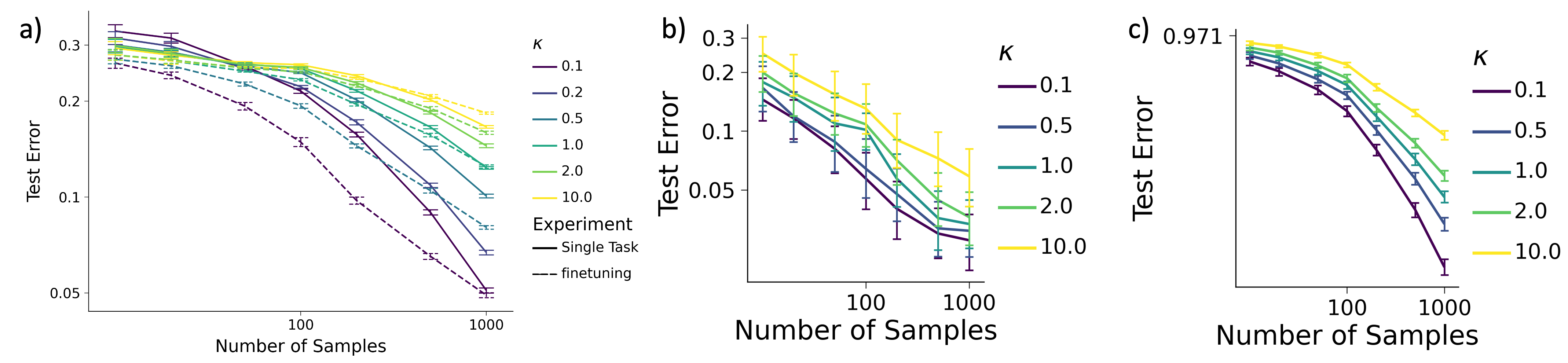}
    \caption{
 \textbf{Further ResNet experiments:} \textbf{(a)} \textbf{\textit{New SVHN Dataset}:} After pretraining on 98 classes of CIFAR-100, we fine-tune the model on 2 classes of SVHN. Consistent with our CIFAR results, initializing ResNet with a non-standard small scale ($\kappa < 1$, denoting negative relative scaling) improves fine-tuning generalization. We contrast these results with the single task learning results on the two classes of SVHN and observe benefits of fine-tuning in the intermediate sample setting. We posit that the gap in performance gain (compared to CIFAR FT) might be smaller because the task overlap is smaller \textbf{(b)} \textbf{\textit{Adam Optimizer}:} To test robustness across optimizers, we replace SGD with Adam (weight decay of $5 \times 10^{-3}$ during pre-training; decay towards initial weights during fine-tuning, inspired by our diagonal network findings). The $\kappa$ trend remains consistent, though slightly attenuated. \textbf{(c)} \textbf{\textit{Data Split}}: We adjust the CIFAR-100 pretraining/fine-tuning split from 98/2 to 60/40. The benefits of a smaller $\kappa$ on generalization persist, confirming the robustness of the relative scaling effect under different task distributions.
}
\label{fig:Further_experiemnts}
\end{figure}

\subsection{Data split}
\label{app:data_split}
Data Split: We vary the CIFAR-100 split from 98/2 (pretraining on 98 classes and fine-tuning on 2) to 60/40. Consistent with our CIFAR results, a smaller initialization scale () improves generalization (Fig. \ref{fig:experiments} \textbf{c})
% \subsection{Experiments discussion}

% First, our theoretical analysis is developed in a diagonal two-layer setting, whereas the ResNet architecture includes fully connected layers with residual connections and batch normalization, which may affect the applicability of our results. Moreover, extending the notion of balancedness to deeper networks remains an open problem, with only a few notable exceptions (e.g., \cite{kunin_2024_get}). Second, our experiments do not explicitly control for sparsity levels or feature overlap, which may partially explain the observed performance discrepancies. We cautiously assume that the fine-tuning task exhibits lower sparsity than the pretraining task. Finally, our theoretical results are derived for linear networks, while ResNets are inherently nonlinear architectures. We leave a careful study of these parameters in practical networks to future work. Our theory serves to highlight the key initialization parameters that influence fine-tuning performance.

\subsection{Modular addition in Transformers}
\label{app:transformers}
\begin{figure}
    \centering
    \includegraphics[width=0.8\linewidth]{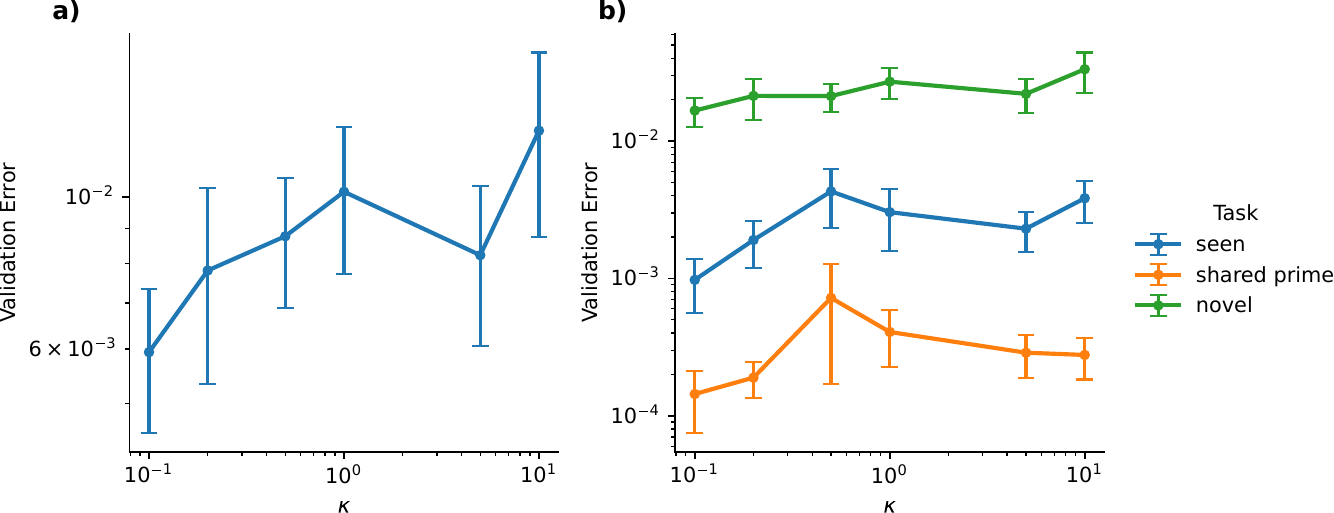}
    \caption{\textbf{Experiments on Transformers with modular multiplication.} We pretrain a one-hidden-layer Transformer on a mixture of modular multiplication and addition tasks using three moduli ($p=33,145,194$). Following \citet{kunin_2024_get}, we manipulate balancedness by multiplying the initial weights of the embedding by $\kappa$. We then fine-tune the Transformer on modular multiplication with nine distinct moduli, to test varying degrees of task overlap. Specifically, we consider the seen moduli ($p=33,145,194$), in which case the model must select the relevant subset of pretraining features; moduli that share prime factors ($p=55,87,97$), in which case the model can similarly rely on a set of overlapping features; and moduli that do not share any prime factors ($p=49,127,169$), in which case the model may be able to transfer some features but may also have to learn a set of novel features. We ran 25 random seeds per setup; error bars reflect $\pm$ one standard error. \textbf{a)} Best validation error as a function of $\kappa$. Smaller $\kappa$ generally performs better. \textbf{b)} Best validation error averages within the different categories of fine-tuning tasks. Tasks involving only novel prime factors are noticeably harder.}
    \label{fig:transformers}
\end{figure}
To assess whether our insights transfer to a different model architecture, we considered a novel PT+FT setup in Transformers. We pretrained a one-hidden-layer Transformer, expanding the codebase provided under the following link: \url{https://github.com/teddykoker/grokking}. The Transformer was pretrained on six different target outputs (using different linear readouts): for inputs $x,y$, these target outputs were $x\operatorname{ op }y\mod b$, where $\operatorname{op}\in\{\cdot,+\}$ and $b\in\{33,145,194\}$. We fine-tuned these models on modular addition using nine possible moduli: three seen moduli ($33,145,194$), three novel moduli that share prime factors with the seen moduli ($55,87,97$), and three moduli that don't share any prime factors ($49,127,169$). We pretrained and fine-tuned all models using Adam with a learning rate of $4\cdot 10^{-3}$, no weight decay, $\beta_1=0.9,\beta_2=0.98$. We pretrained for one million steps and fine-tuned for 32,000 steps, in order to investigate how quickly models are able to transfer their features to these different tasks. To examine the impact of balancedness, we multiplied the initial weights of the embedding by $\kappa$ (following \citet{kunin_2024_get}).

We found that smaller balancedness generally improves generalization performance during fine-tuning (Fig.~\ref{fig:transformers}a). Moreover, we also found that tasks involving only novel prime factors were noticeably harder (Fig.~\ref{fig:transformers}b).
\section{Implementation and Simulations}
\label{app:implementation}
\subsection{Figure 2}
In the code we approximate 

\begin{align}
\text{PD} = \frac{\partial \log P(k, \beta_{FT})}{\partial \log \beta_{PT}} &=\frac{\partial \log P}{\partial \log k} \cdot \frac{\partial \log k}{\partial \log \beta_{PT}} =
\frac{\partial log P}{\partial log k} \cdot \frac{\partial k}{\partial \beta_{PT}} \cdot \frac{\beta_{PT}}{k} \\
&= \left[ \frac{1}{2} - \frac{ \beta_{FT}}{\sqrt{k}} \cdot \frac{q'}{q} \right]
\cdot \frac{\beta_{PT}}{k} \cdot \frac{\partial k}{\partial \beta_{PT}} \\
\end{align}
with 
\begin{align}
\frac{\partial k}{\partial \beta_{PT}} &\approx \frac{k_{+} - k_{-}}{2 \epsilon}. \\
k_{+}  &= 2 (\lambda_{\text{PT}} + c_{\text{PT}})
   \Bigl(1+\sqrt{1+((\beta_{PT} +\epsilon )/c_{\text{PT}})^{2}}\Bigr)\;+\;\gamma_{FT}^{2} \\
k_{-}  & = 2 (\lambda_{\text{PT}} + c_{\text{PT}})
   \Bigl(1+\sqrt{1+((\beta_{PT} -\epsilon)/c_{\text{PT}})^{2}}\Bigr)\;+\;\gamma_{FT}^{2}    
\end{align}

\subsection{Figure 3}

\subsubsection{Solving the replica fixed point equations numerically}

The RS order parameters are
\[
p = \mathbb{E}\!\left[(\beta_{\mathrm{ft}} - \hat{\beta}_{\mathrm{ft}})^2\right],
\qquad
\chi = \theta_0\,\mathbb{E}\!\left[\partial_y \hat{\beta}^{\mathrm{sc}}(y;K,\theta)\right],
\]
where the expectation is over $(\beta_{\mathrm{ft}},K,v)$ with
$y=\beta_{\mathrm{ft}}+\sigma v$, $v\sim\mathcal N(0,1)$, and
$\sigma^2=\theta_0=(p+\sigma_0^2)/\alpha$.

The fixed-point closure is
\[
\theta_0 = \delta(\sigma_0^2 + p),
\qquad
\theta = \delta(\chi + \lambda),
\]
with $\delta = 1/\alpha$, additive label-noise variance $\sigma_0^2$
(set to zero in the noiseless teacher setting), and an optional external
ridge parameter $\lambda$.

These equations are solved iteratively for $(\theta_0,\theta)$ at each $\alpha$. In order to solve each equation, we need to evaluate to separate expectations.

% The RS order parameters are
% \[
% p = \mathbb{E}\!\left[(\beta_{\mathrm{ft}} - \hat{\beta}_{\mathrm{ft}})^2\right],
% \qquad
% \chi= \mathbb{E}\!\left[\sigma^2\right],
% \]
% where the expectation is over $(\beta_{\mathrm{ft}}, K, v)$, and $v \sim \mathcal N(0,1)$, with
% $y = \beta_{\mathrm{ft}} + \sigma v$ and
% $\sigma^2 = \theta_0 = (p+\sigma_0^2)/\alpha$.

% The fixed-point closure is
% \[
% \theta_0 = \sigma_0^2 + \delta p,
% \qquad
% g = \gamma_{\mathrm{ext}} + \delta \chi,
% \]
% with $\delta = 1/\alpha$, additive label-noise variance $\sigma_0^2$
% (set to zero in the noiseless teacher setting), and an optional external
% ridge parameter $\gamma_{\mathrm{ext}}$.

% These equations are solved iteratively for $(\theta_0, g)$ at each $\alpha$.

The expectations defining $p$ and $\chi$ are approximated by Monte-Carlo
sampling. Given $m$ i.i.d.\ samples $(\beta_{\mathrm{ft},i},k_i,v_i)$, we form
scalar observations
\[
y_i = \beta_{\mathrm{ft},i} + \sqrt{\theta_0}\,v_i,
\qquad v_i \sim \mathcal N(0,1).
\]

For each $y_i$ and penalty parameter $k_i$, the scalar estimator
$\hat{\beta}_{\mathrm{ft},i}$ is given by the RS scalar denoiser
\[
\hat{\beta}_{\mathrm{ft},i}
=
\hat{\beta}^{\mathrm{sc}}(y_i;k_i,\theta)
=
\arg\min_{\beta\in\mathbb R}
\left\{
\frac{(y_i-\beta)^2}{2\theta}
+
q_{k_i}(\beta)
\right\}.
\]

This scalar optimization balances fidelity to the noisy observation $y_i$
against the implicit-bias penalty $q_{k_i}$, with $\theta$ controlling the
strength of the quadratic term. For the $q_k$ family considered here, the
objective is strictly convex, so the minimizer is unique and can be computed
reliably via a safeguarded Newton method.

At the scalar optimum, we also compute the local curvature
\[
s_i^2 \;:=\; \Big(\tfrac{1}{\theta} + q_{k_i}''(\hat{\beta}_{\mathrm{ft},i})\Big)^{-1},
\]
where $\hat{\beta}_{\mathrm{ft},i}=\hat{\beta}^{\mathrm{sc}}(y_i;k_i,\theta)$ and
$y_i=\beta_{\mathrm{ft},i}+\sqrt{\theta_0}\,v_i$.
This expression follows by implicit differentiation of the first-order
optimality condition for the scalar denoiser:
\[
0 \;=\; \frac{\hat{\beta}^{\mathrm{sc}}(y;k,\theta)-y}{\theta} + q_k'\!\big(\hat{\beta}^{\mathrm{sc}}(y;k,\theta)\big).
\]
Differentiating both sides with respect to $y$ gives
\[
0 \;=\; \frac{1}{\theta}\Big(\partial_y \hat{\beta}^{\mathrm{sc}}(y;k,\theta) - 1\Big)
      + q_k''\!\big(\hat{\beta}^{\mathrm{sc}}(y;k,\theta)\big)\,\partial_y \hat{\beta}^{\mathrm{sc}}(y;k,\theta),
\]
and therefore
\[
\partial_y \hat{\beta}^{\mathrm{sc}}(y;k,\theta)
\;=\;
\frac{1}{1+\theta\,q_k''(\hat{\beta}^{\mathrm{sc}}(y;k,\theta))}
\;=\;
\frac{s^2}{\theta},
\qquad
s^2=\Big(\tfrac{1}{\theta}+q_k''(\hat{\beta}^{\mathrm{sc}})\Big)^{-1}.
\]
We compute $s_i^2$ (equivalently $\partial_y \hat{\beta}^{\mathrm{sc}}$) at the optimum
because it provides a numerically stable evaluation of the susceptibility:
it only involves the positive quantity $\tfrac{1}{\theta}+q_{k_i}''(\hat{\beta}_{\mathrm{ft},i})$
(which is bounded away from zero under strict convexity), avoiding finite-difference
approximations of $\partial_y \hat{\beta}^{\mathrm{sc}}$ that can be noisy or ill-conditioned.

The Monte-Carlo estimates of the RS moments are then
\[
\hat{p} \;=\; \frac{1}{m}\sum_{i=1}^m (\beta_{\mathrm{ft},i}-\hat{\beta}_{\mathrm{ft},i})^2,
\qquad
\hat{\chi} \;=\; \theta_0 \cdot \frac{1}{m}\sum_{i=1}^m \partial_y \hat{\beta}^{\mathrm{sc}}(y_i;k_i,\theta)
            \;=\; \theta_0 \cdot \frac{1}{m}\sum_{i=1}^m \frac{s_i^2}{\theta},
\]
which are substituted into the fixed-point closure relations and iterated (with damping)
until convergence.

When the penalty parameter $k$ takes values in a finite set, the Monte-Carlo evaluation
can be accelerated by grouping samples with identical $k$.
In the PT$\to$FT oracle considered here, $k$ is deterministic conditional on PT activity:
by construction (and because $\alpha_{\mathrm{pt}}\ge 1$ in our setting), pretraining
recovers the teacher exactly so each coordinate has $\hat{\beta}_{\mathrm{pt},d}\in\{0,\pm 1/\sqrt{\rho_{\mathrm{pt}}}\}$.
Since $k_d$ is a deterministic function of $\hat{\beta}_{\mathrm{pt},d}$ and the
initialization hyperparameters (cf.\ Eq.~(15)), it follows that $k_d$ can only take
finitely many values (one for PT-inactive coordinates and one for PT-active coordinates,
or more generally one per PT group if multiple groups are used).
Importantly, while the proximal solution $\hat{\beta}^{\mathrm{sc}}(y_i;k_i,\theta)$
depends on the sampled $\beta_{\mathrm{ft},i}$ through $y_i$, the parameter $k_i$
itself does not: it is fixed by the pretrained coordinate type (PT-active vs.\ PT-inactive).
Therefore grouping by identical $k$ is valid even though $\hat{\beta}_{\mathrm{ft},i}$
varies across samples within a group.

\subsubsection{Fixed-point iteration and numerical stabilization}
Although the replica-symmetric fixed-point equations are theoretically well
defined, naive numerical iteration can be unstable, especially near sharp
transitions or in regimes with multiple admissible solutions. We therefore
employ a damped fixed-point scheme with simple numerical safeguards to ensure
stable and reproducible convergence.

\paragraph{Iteration variables.}
For a fixed inverse sample efficiency $\delta = 1/\alpha$, the solver iterates
on the RS state variables $(\theta_0, g)$. Given a current iterate,
Monte-Carlo estimates of the RS moments $\hat{p}$ and $\hat{\chi}$ are computed
as described in Section~2 and used to form the updates
\[
\theta_0^{\mathrm{new}} = \sigma_0^2 + \delta \hat{p},
\qquad
g^{\mathrm{new}} = \gamma_{\mathrm{ext}} + \delta g \hat{\chi}.
\]
A fixed point of this map defines the RS solution at the given $\alpha$.

\paragraph{Damping.}
To suppress oscillations and divergence, the updates are applied with damping
parameter $\lambda \in (0,1]$:
\[
\theta_0 \leftarrow (1 - \lambda)\theta_0 + \lambda \theta_0^{\mathrm{new}},
\qquad
g \leftarrow (1 - \lambda)g + \lambda g^{\mathrm{new}}.
\]
Smaller values of $\lambda$ slow convergence but substantially improve
stability, particularly near phase transitions.

\paragraph{Positivity constraints.}
The RS variables are constrained to remain in their admissible domains by
enforcing
\[
\theta_0 \ge \sigma_0^2,
\qquad
g \ge g_{\min},
\]
where $g_{\min} > 0$ is a small numerical floor. These constraints prevent
degeneracy in the scalar denoiser and curvature evaluation and act purely as
numerical safeguards. I practice we pick $g_{\min} = 10^{-14}$

\paragraph{Convergence criteria.}
Convergence is assessed using the maximum absolute residual
\[
\mathrm{res}
= \max\!\left\{
\lvert \theta_0^{\mathrm{new}} - \theta_0 \rvert,\;
\lvert g^{\mathrm{new}} - g \rvert
\right\}.
\]
The iteration terminates when $\mathrm{res} < \texttt{tol}$ or when a preset
iteration limit is reached. 

\subsubsection{Stabilization across sample efficiencies}
In addition to within-$\alpha$ stabilization, the RS fixed-point equations may
admit multiple stable solutions as the sample efficiency $\alpha$ varies. To
robustly track solutions across $\alpha$, we use continuation with warm starts
and a simple branch-selection rule.

\paragraph{Forward/backward continuation in $\alpha$.}
Given a grid of sample efficiencies $\{\alpha_j\}$ (equivalently
$\delta_j = 1/\alpha_j$), we solve the fixed-point equations sequentially in two
passes. In the forward pass, solutions at $\alpha_j$ are initialized using the
converged state from $\alpha_{j-1}$; in the backward pass, the grid is traversed
in reverse order, initializing from $\alpha_{j+1}$. This bidirectional
continuation helps detect multistability and reduces sensitivity to
initialization.

We explicitly verify that the forward and backward continuations converge to
consistent solutions across the entire grid. For each $\alpha_j$, we compare
the converged forward and backward fixed points and observe no evidence of multistability. Quantitatively, the median branch mismatch
$\|\theta_{\mathrm{fwd}}-\theta_{\mathrm{bwd}}\|$ is $\sim 1.5\times 10^{-7}$ (with
the 95th percentile below $2\times 10^{-6}$), and the corresponding fixed-point
residuals are of order $10^{-7}$–$10^{-6}$. The resulting Monte-Carlo estimates
of $p$ and $\chi$ differ by less than $1.5\%$ in median and $2.5\%$ at the
95th percentile, well within the intrinsic sampling error.

In addition, the implementation includes an automated reliability score that
flags numerical instabilities; no runs failed to converge or exhibited
exploding behavior across all sweeps. A small MSE floor ($\sim10^{-12}$) is used
to prevent numerical issues in the high-sample-efficiency regime, where such
instabilities are known to arise. Taken together, these checks confirm that the
fixed-point solutions are robust, path-independent, and insensitive to the
direction of continuation in $\alpha$.

\paragraph{Warm starts.}
At each $\alpha_j$, the fixed-point iteration is warm-started from the nearest
converged solution along the continuation path, rather than from a generic
initialization. This reduces the number of iterations required for convergence.

\paragraph{Branch selection rule.}
Forward and backward continuation may converge to different fixed points at the
same $\alpha$, reflecting genuine RS multistability. In such cases, we select
the branch with smaller predicted MSE as the reported solution. The discrepancy
between forward and backward solutions is retained as a diagnostic of numerical
sensitivity.

\subsubsection{Diagnostics and reliability checks}
To assess the reliability of numerical RS solutions, we record diagnostics that
quantify fixed-point convergence, multistability across continuation paths, and
Monte-Carlo uncertainty.

\paragraph{Residual checks.}
For each $\alpha$, convergence is monitored using the fixed-point residual
\[
\mathrm{res}
= \max\!\left\{
\lvert \theta_0^{\mathrm{new}} - \theta_0 \rvert,\;
\lvert g^{\mathrm{new}} - g \rvert
\right\}.
\]
Solutions with $\mathrm{res} > \texttt{tol}$ are considered unconverged and
flagged as unreliable.

\paragraph{Branch mismatch.}
Let $\mathrm{MSE}_{\mathrm{fwd}}(\alpha)$ and
$\mathrm{MSE}_{\mathrm{bwd}}(\alpha)$ denote the RS predictions obtained from
forward and backward continuation, respectively. We quantify branch
disagreement via
\[
\Delta_{\mathrm{branch}}(\alpha)
= \left|
10 \log_{10}\!\bigl(\mathrm{MSE}_{\mathrm{fwd}}(\alpha)\bigr)
- 10 \log_{10}\!\bigl(\mathrm{MSE}_{\mathrm{bwd}}(\alpha)\bigr)
\right|.
\]
Large values indicate RS multistability or sensitivity to initialization.

\paragraph{Monte-Carlo uncertainty.}
Let
\[
e_i^2 = \left( \beta_{\mathrm{ft},i} - \hat{\beta}_{\mathrm{ft},i} \right)^2 .
\]
The Monte-Carlo estimate
\[
\widehat{\mathrm{MSE}}
= \frac{1}{m} \sum_{i=1}^m e_i^2
\]
is assigned a standard error $\mathrm{SE}(\widehat{\mathrm{MSE}})$, computed
using batch-means estimation. Uncertainty on the log scale is approximated via
the delta method,
\[
\mathrm{SE}_{\mathrm{dB}}
\approx
\frac{10}{\ln 10}\,
\frac{\mathrm{SE}(\widehat{\mathrm{MSE}})}
{\max\{\widehat{\mathrm{MSE}}, \varepsilon\}},
\]
with a small floor $\varepsilon > 0$ to avoid numerical blow-up.

\paragraph{Failure indicators.}
A solution at $\alpha$ is flagged as unreliable if any of the following occur:
\begin{itemize}
\item $\mathrm{res} > \texttt{tol}$: the fixed-point residual exceeds the prescribed
tolerance. In all experiments, the residual remained strictly below
$\texttt{tol}=10^{-10}$, with a maximal observed value of approximately
$9.9\times 10^{-7}$.
\item $\Delta_{\mathrm{branch}}(\alpha)$ exceeds a prescribed threshold: the
mismatch between forward and backward continuation branches becomes large.
Across all sweeps, the maximal observed branch mismatch
($\Delta_{\mathrm{branch}}$, measured in dB) was approximately $3.6\times10^{-4}$~dB,
occurring in the most challenging low-$\alpha$ regimes, and is negligible
relative to the reported MSE effects.
\item $\mathrm{SE}_{\mathrm{dB}}$ is comparable to or larger than the reported effect
size: the Monte Carlo standard error of the predicted MSE (in dB) becomes large.
In practice, $\mathrm{SE}_{\mathrm{dB}}$ was capped by a numerical floor and remained
below approximately $0.07$~dB even in the low-$\alpha$ regime, ensuring that
uncertainty in the RS predictions remains well controlled as $\mathrm{MSE}\to 0$.
\end{itemize}

These indicators are retained alongside RS predictions to identify unstable or
poorly resolved regions. In the current experimental setups, no solutions
exceeded these reliability thresholds.

The consistently small values of all failure indicators can be attributed to
(i) strict convergence criteria combined with conservative damping and a large
iteration budget, (ii) the absence of detectable multistability in the explored
parameter regimes, as evidenced by near-identical forward and backward
continuations, and (iii) explicit slope capping in the dB-scale error estimation,
which prevents numerical amplification of Monte Carlo uncertainty at very small
MSE.

% \paragraph{Failure indicators.}
% A solution at $\alpha$ is flagged as unreliable if any of the following occur:
% \begin{itemize}
% \item $\mathrm{res} > \texttt{tol}$;
% \item $\Delta_{\mathrm{branch}}(\alpha)$ exceeds a prescribed threshold;
% \item $\mathrm{SE}_{\mathrm{dB}}$ is comparable to or larger than the reported
% effect size.
% \end{itemize}
% These indicators are retained alongside RS predictions to identify unstable or
% poorly resolved regions.

\subsubsection{Parameter sweeps and curve generation}
All replica-theory curves are generated by solving the RS fixed-point equations over a dense grid of sample efficiencies $\alpha$, mirroring the structure of the corresponding empirical experiments.

\paragraph{Numerical parameters.}
Replica fixed-point equations are solved using Monte-Carlo approximation with
$m = 80{,}000$ samples per $\alpha$. Fixed-point iteration uses damping factor
$0.25$, convergence tolerance $\texttt{tol} = 10^{-6}$, and a maximum of $900$
iterations per $\alpha$. A small external ridge parameter
$\gamma_{\mathrm{ext}} = 10^{-6}$ is included to improve numerical stability.
All runs use a single random seed.

\paragraph{Parameter sweeps.}
For each experiment, a baseline configuration is evaluated together with
one-dimensional sweeps over individual hyperparameters, with all remaining
parameters held fixed. Baseline values are excluded from sweep lists to avoid
duplicate runs. Each parameter configuration produces a full generalization
curve indexed by $\alpha$.

\paragraph{Parallelization and curve assembly.}
To increase parallelism, the $\alpha$-grid may be partitioned into contiguous
chunks, with each job solving the RS equations on a subset of $\alpha$ values
for a fixed parameter configuration. Results from all chunks are concatenated
to form the final curve, which is saved together with per-$\alpha$ diagnostics
and metadata.

\subsubsection{Diagonal network experiments (empirical setup)}
\label{app:diag_experiments}

This section specifies the empirical diagonal-network experiments used to compare finite-dimensional training with the replica-symmetric predictions. All experiments use diagonal linear networks trained by gradient flow and are averaged over multiple random seeds.

\paragraph{Global settings.}
Unless stated otherwise, all experiments use the following hyperparameters:

\begin{center}
\begin{tabular}{l c}
\hline
Parameter & Value \\
\hline
Input dimension $d$ & $5000$ \\
Test samples & $10{,}000$ \\
Learning rate & $0.5$ \\
Max epochs & $5 \times 10^{6}$ \\
Convergence threshold & $10^{-4}$ \\
Data scale $\alpha = n/d$ & $11$ values in $[0.01, 0.5]$ \\
Random seeds & $14$ (seeds $6$--$19$) \\
Teacher signal scale $a_{PT}$ & $1.0$ \\
Pretraining sparsity $\rho_{PT}$ & $0.1$ \\
\hline
\end{tabular}
\end{center}

Mean squared error is evaluated on an independent test set after convergence.

\paragraph{Task parameterization and overlap.}
Fine-tuning task structure is specified by $(\rho^{\mathrm{shared}}_{FT},\rho^{\mathrm{new}}_{FT})$ as in Eq.~(11), with total fine-tuning sparsity
\[
\rho_{FT} := \rho^{\mathrm{shared}}_{FT}+\rho^{\mathrm{new}}_{FT}.
\]
For convenience, we additionally report the \emph{overlap fraction}
\[
\omega := \frac{\rho^{\mathrm{shared}}_{FT}}{\rho^{\mathrm{shared}}_{FT}+\rho^{\mathrm{new}}_{FT}}
\quad\in[0,1],
\]
so that, for a fixed $\rho_{FT}$, sweeping $\omega$ corresponds to setting
$\rho^{\mathrm{shared}}_{FT}=\omega\,\rho_{FT}$ and
$\rho^{\mathrm{new}}_{FT}=(1-\omega)\,\rho_{FT}$.

\paragraph{Pretrain--fine-tune protocol.}
For pretrain--fine-tune (PT+FT) experiments, fine-tuning is initialized from an analytically constructed infinite-pretraining state determined by the pretraining teacher $\beta_{PT}$ and homogeneous parameters $(c_{PT}, \lambda_{PT})$. Here we assume $\alpha_{PT}\ge 1$ so that pretraining perfectly recovers the ground truth $\beta_{PT}$. At the start of fine-tuning, a reinitialization scale $\gamma_{\mathrm{reinit}}$ is applied to set $\beta(0)\approx 0$ while preserving the coordinate-wise richness structure.

\paragraph{Experiment 1: Benefit from existing features.}
Fixed parameters:
\[
\rho_{PT}=0.1,
\qquad
\rho_{FT}=\rho^{\mathrm{shared}}_{FT}+\rho^{\mathrm{new}}_{FT}=0.1.
\]
Baseline:
\[
\omega=0.5\;\;(\text{i.e., }\rho^{\mathrm{shared}}_{FT}=\rho^{\mathrm{new}}_{FT}=0.05),
\quad
c_{PT}=10^{-3},
\quad
\lambda_{PT}=0,
\quad
\gamma_{\mathrm{reinit}}=0.
\]

\noindent Swept parameters:
\begin{center}
\begin{tabular}{l c}
\hline
Parameter & Values \\
\hline
Overlap fraction $\omega$ &
$\{0,0.5,1\}$ \\
% & (implemented as $(\rho^{\mathrm{shared}}_{FT},\rho^{\mathrm{new}}_{FT}) \in \{(0,0.1),(0.05,0.05),(0.1,0)\}$ with $\rho_{FT}=0.1$) \\
$c_{PT}$ & $\{10^{-6},10^{-3},1\}$ \\
$\lambda_{PT}$ & $\{-10^{-3},-0.99\cdot10^{-3},0,0.99\cdot10^{-3}\}$ \\
$\gamma_{\mathrm{reinit}}$ & $\{0,1,10\}$ \\
\hline
\end{tabular}
\end{center}

\paragraph{Experiment 2: Learning new features.}
Fixed parameters:
\[
\rho_{PT}=0.1,
\qquad
\rho^{\mathrm{shared}}_{FT}=0
\quad(\text{equivalently } \omega=0).
\]

\noindent Swept parameters:
\begin{center}
\begin{tabular}{l c}
\hline
Parameter & Values \\
\hline
$\rho^{\mathrm{new}}_{FT}$ & $\{0.1,0.9\}$ \\
($\rho_{FT}=\rho^{\mathrm{new}}_{FT}$ since $\rho^{\mathrm{shared}}_{FT}=0$) & \\
$c_{PT},\lambda_{PT},\gamma_{\mathrm{reinit}}$ & same as Experiment~1 \\
\hline
\end{tabular}
\end{center}

\paragraph{Experiment 3: Nested feature regime.}
Fixed parameters:
\[
\rho_{PT}=0.1.
\]

\noindent Swept parameters:
\begin{center}
\begin{tabular}{l c}
\hline
Parameter & Values \\
\hline
Overlap fraction $\omega$ & $\{0,1\}$ \\
$\rho_{FT}$ & $\{0.01,0.04\}$ \\
\hline
Implied $(\rho^{\mathrm{shared}}_{FT},\rho^{\mathrm{new}}_{FT})$ &
\begin{tabular}{l}
$\omega=1:\;(\rho^{\mathrm{shared}}_{FT},\rho^{\mathrm{new}}_{FT})=(\rho_{FT},0)$ \\
$\omega=0:\;(\rho^{\mathrm{shared}}_{FT},\rho^{\mathrm{new}}_{FT})=(0,\rho_{FT})$
\end{tabular} \\
\hline
$c_{PT},\lambda_{PT},\gamma_{\mathrm{reinit}}$ & same as Experiment~1 \\
\hline
\end{tabular}
\end{center}

\paragraph{Experiment 4: Single-task learning (SLT).}
In this setting, the network is trained from scratch without pretraining. We generate a single sparse task with sparsity $\rho$ (denoted $\rho_{PT}$ for notational consistency).

\noindent Swept parameters:
\begin{center}
\begin{tabular}{l c}
\hline
Parameter & Values \\
\hline
Task sparsity $\rho$ (denoted $\rho_{PT}$) & $\{0.01,0.04,0.1,0.9\}$ \\
$c_{PT}$ & $\{10^{-6},10^{-3},1\}$ \\
$\lambda_{PT}$ & $\{0,-c_{PT},-0.99c_{PT},0.99c_{PT}\}$ \\
\hline
\end{tabular}
\end{center}

\paragraph{Mapping to replica curves.}
Empirical results are compared to replica-symmetric predictions by mapping the initialization at the start of fine-tuning to the replica penalty parameter
\[
k_i = 4\,c_{FT,i}^2,
\]
and solving the associated fixed-point equations for each data scale $\alpha$.

\paragraph{Code.}
The experiments and replica curves are generated using:
\begin{itemize}
\item \texttt{ptft\_empirical\_finetune\_df.py}
\item \texttt{ptft\_replica\_qk.py}
\item \texttt{compute\_emp\_curves\_worker\_exp[1--4].py}
\item \texttt{ExperimentSetup.md}
\end{itemize}

\subsection{Figure~\ref{fig:experiments}}

All experiments are performed using ResNet-18. Each plot is based on 30 random seeds, with the mean performance shown and standard errors represented as error bars. The network is pre-trained on 49,000 samples until the loss reaches 0.01. During fine-tuning, the number of samples is varied as indicated in the figure, using a loss threshold of 0.0001. For the corresponding ENSD experiments, we use 500 samples.

%%%%%%%%%%%%%%%%%%%%%%%%%%%%%%%%%%%%%%%%%%%%%%%%%%%%%%%%%%%%%%%%%%%%%%%%%%%%%%%
%%%%%%%%%%%%%%%%%%%%%%%%%%%%%%%%%%%%%%%%%%%%%%%%%%%%%%%%%%%%%%%%%%%%%%%%%%%%%%%

\end{document}